\patchcmd{\@bibitem}{\ignorespaces}{\label{bib:#1}\ignorespaces}{}{}
\newcommand{\mycite}[1]{[\ref{bib:#1}]}
\newcommand{\mytcite}[1]{\texorpdfstring{\cite{#1}}{\mycite{#1}}}
\title{On the Minimax Regret for Online Learning\\with Feedback Graphs}
\author{%
  Khaled Eldowa\thanks{Equal contribution.}\\
  Università degli Studi di Milano, Milan, Italy\\
  \texttt{khaled.eldowa@unimi.it}\\
\And
  Emmanuel Esposito\samethanks\\
  Università degli Studi di Milano, Milan, Italy\\
  \& Istituto Italiano di Tecnologia, Genoa, Italy\\
  \texttt{emmanuel@emmanuelesposito.it}\\
\And
  Tommaso Cesari\\
  University of Ottawa, Ottawa, Canada\\
  \texttt{tcesari@uottawa.ca}\\
\And
  Nicolò Cesa-Bianchi\\
  Università degli Studi di Milano, Milan, Italy\\
  \& Politecnico di Milano, Milan, Italy\\
  \texttt{nicolo.cesa-bianchi@unimi.it}\\
}
\begin{document}

\maketitle
\begin{abstract}
In this work, we improve on the upper and lower bounds for the regret of online learning with strongly observable undirected feedback graphs.
The best known upper bound for this problem is $\mathcal{O}\bigl(\sqrt{\alpha T\ln K}\bigr)$, where $K$ is the number of actions, $\alpha$ is the independence number of the graph, and $T$ is the time horizon. The $\sqrt{\ln K}$ factor is known to be necessary when $\alpha = 1$ (the experts case). On the other hand, when $\alpha = K$ (the bandits case), the minimax rate is known to be $\Theta\bigl(\sqrt{KT}\bigr)$, and a lower bound $\Omega\bigl(\sqrt{\alpha T}\bigr)$ is known to hold for any $\alpha$. Our improved upper bound $\mathcal{O}\bigl(\sqrt{\alpha T(1+\ln(K/\alpha))}\bigr)$ holds for any $\alpha$ and matches the lower bounds for bandits and experts, while interpolating intermediate cases. To prove this result, we use FTRL with $q$-Tsallis entropy for a carefully chosen value of $q \in [1/2, 1)$ that varies with $\alpha$. The analysis of this algorithm requires a new bound on the variance term in the regret.
We also show how to extend our techniques to time-varying graphs, without requiring prior knowledge of their independence numbers.
Our upper bound is complemented by an improved $\Omega\bigl(\sqrt{\alpha T(\ln K)/(\ln\alpha)}\bigr)$ lower bound for all $\alpha > 1$, whose analysis relies on a novel reduction to multitask learning.
This shows that a logarithmic factor is necessary as soon as $\alpha < K$.
\end{abstract}

\section{Introduction}
Feedback graphs \cite{mannor2011side} provide an elegant interpolation between two popular online learning models: multiarmed bandits and prediction with expert advice. When learning with an undirected feedback graph $G$ over $K$ actions, the online algorithm observes not only the loss of the action chosen in each round, but also the loss of the actions that are adjacent to it in the graph. Two important special cases of this setting are: prediction with expert advice (when $G$ is a clique) and $K$-armed bandits (when $G$ has no edges).
When losses are generated adversarially, the regret in the feedback graph setting with strong observability has been shown to scale with the independence number $\alpha$ of $G$. Intuitively, denser graphs, which correspond to smaller independence numbers, provide more feedback to the learner, thus enabling a better control on regret. More specifically, the best known upper and lower bounds on the regret after $T$ rounds are $\mathcal{O}\big(\sqrt{\alpha T\log K}\big)$ and $\Omega\big(\sqrt{\alpha T}\big)$ \cite{alon-journal,alon2013bandits}. 
It has been known for three decades that this upper bound is tight for $\alpha = 1$ (the experts case, \cite{cesa1997use,cesa1993use}). 
When $\alpha = K$ (the bandits case), the lower bound $\Omega\big(\sqrt{KT}\big)$---which has also been known for nearly three decades \cite{auer1995gambling,auer2002nonstochastic}---was matched by a corresponding upper bound $\mathcal{O}\big(\sqrt{KT}\big)$ only in 2009 \cite{audibert2009minimax}. These results show that in feedback graphs, the logarithmic factor $\sqrt{\log K}$ is necessary (at least) for the $\alpha=1$ case, while it must vanish from the minimax regret as $\alpha$ grows from $1$ to $K$, but the current bounds fail to capture this fact. In this work, we prove new upper and lower regret bounds that for the first time account for this vanishing logarithmic factor.

To prove our new upper bound, we use the standard FTRL algorithm run with the $q$-Tsallis entropy regularizer ($q$-FTRL for short). It is well-known \cite{abernethy2015fighting} that for $q=\frac{1}{2}$ this algorithm (run with appropriate loss estimates) achieves regret $\mathcal{O}\big(\sqrt{KT}\big)$ when $\alpha=K$ (bandits case), while for $q \to 1^-$ the same algorithm (without loss estimates) recovers the bound $\mathcal{O}\big(\sqrt{T\log K}\big)$ when $\alpha=1$ (experts case). 
When $G$ contains all self-loops, we show in \Cref{thm:regret-self-loops} that, if $q$ is chosen as a certain function $q(\alpha,K)$, then $q(\alpha,K)$-FTRL, run with standard importance-weighted loss estimates, achieves regret $\calO\bigl(\sqrt{\alpha T(1+\log(K/\alpha))}\bigr)$. This is a strict improvement over the previous bound, and matches the lower bounds for bandits and experts while interpolating the intermediate cases. This interpolation is reflected by our choice of $q$, which goes from $\frac{1}{2}$ to $1$ as $\alpha$ ranges from $1$ to $K$. The main technical hurdle in proving this result is an extension to arbitrary values of $q \in \big[\frac{1}{2},1\big)$ of a standard result---see, e.g., \cite[Lemma 3]{mannor2011side}---that bounds in terms of $\alpha$ the variance term in the regret of $q$-FTRL.
In \Cref{thm:regret}, using a modified loss estimate, this result is extended to any strongly observable undirected graph \cite{alon2015beyond}, a class of feedback graphs in which some of the actions do not reveal their loss when played. In \Cref{thm:regret-doubling}, we show via a doubling trick that our new upper bound can also be obtained (up to constant factors) without the need of knowing (or computing) $\alpha$. As the resulting algorithm is oblivious to $\alpha$, our analysis also applies to arbitrary sequences of graphs $G_t$, where $K$ is constant but the independence number $\alpha_t$ of $G_t$ can change over time, and the algorithm observes $G_t$ only after choosing an action (the so-called uninformed case). In this setting, the analysis of the doubling trick is complicated by the non-trivial dependence of the regret on the sequence of $\alpha_t$.

We also improve on the $\Omega\big(\sqrt{\alpha T}\big)$ lower bound by proving a new $\Omega\bigl(\sqrt{\alpha T\log_{\alpha} K}\bigr)$ lower bound for all $\alpha > 1$. This is the first result showing the necessity---outside the experts case---of a logarithmic factor in the minimax regret for all $\alpha < K$. Our proof uses a stochastic adversary generating both losses and feedback graphs via i.i.d.\ draws from a joint distribution. This sequence of losses and feedback graphs can be used to define a hard instance of the multi-task bandits problem, a variant of the combinatorial bandits framework \cite{cesa2012combinatorial}. We then prove our result by adapting known lower bounding techniques for multi-task bandits \cite{audibert2014combinatorial}.
Note that for values of $\alpha$ bounded away from $2$ and $K$, the logarithmic factor $\log_{\alpha} K$ in the lower bound is smaller than the corresponding factor $1 + \log(K/\alpha)$ in the upper bound. Closing this gap remains an open problem.

\subsection{Additional related work}
Several previous works have used the $q$-Tsallis regularizer with $q$ tuned to specific values other than $\frac{1}{2}$ and $1$.
For example, in \cite[Section~4]{zimmert2019connections}, $q$ is chosen as a function of $K$ to prove a regret bound of $\calO\big(\sqrt{\alpha T(\log K)^3}\big)$ for any strongly observable directed feedback graph, which shaves off a $\log T$ factor compared to previous works. This bound is worse than the corresponding bounds for undirected graphs because the directed setting is harder.
Specific choices of $q$ have been considered to improve the regret in settings of online learning with standard bandit feedback. For example, the choice $q = \frac{2}{3}$ was used in \cite{rouyer2020tsallis} to improve the analysis of regret in bandits with decoupled exploration and exploitation.
Regret bounds for arbitrary choices of $q$ are derived in \cite{zimmert2021tsallis,jin2023improved} for a best-of-both-worlds analysis of bandits, though $q=\frac12$ remains the optimal choice.
The $\frac12$-Tsallis entropy and the Shannon entropy ($q=1$) regularizers have been combined before in different ways to obtain best-of-both-worlds guarantees for the graph feedback problem \cite{erez2021towards,ito2022bobw}.
The idea of using values of $q \in (\frac12,1)$ for feedback graphs is quite natural and has been brought up before, e.g., in~\cite{rouyer2022}, but achieving an improved dependence on the graph structure by picking a suitable value of~$q$ has not been, to the best of our knowledge, successfully pursued before.
On the other hand, an approach based on a similar use of the $q$-Tsallis regularizer has been employed by \cite{kwon2016gains} for the problem of multiarmed bandits with sparse losses to achieve a $\calO\brb{\sqrt{sT\ln(K/s)}}$ regret bound, where $s$ is the maximum number of nonzero losses at any round.

Our lower bound is reminiscent of the $\Omega\bigl(\sqrt{K T\log_{K} N}\bigr)$ lower bound proved in \cite{seldin2016lower} for the problem of bandits with expert advice (with $N \geq K$ being the number of experts); see also \cite{eldowa2023information} and \cite{vural2019minimax}.
In that problem, at each time step, experts suggest distributions over actions to the learner, whose regret is computed against the best expert in hindsight. Although the two settings are different, the variant of the multitask bandit problem that our lower bound construction simulates is the same as the one used in the proof of \cite[Theorem~7]{eldowa2023information}.

\section{Problem Setting}
For any integer $n \ge 1$, let $[n] = \{1,\dots,n\}$.
We consider the following game played over $T$ rounds between a learner with action set $V = [K]$ and the environment.
At the beginning of the game, the environment secretly selects a sequence of losses $(\loss_t)_{t\in[T]}$, where $\loss_t\colon V \to [0,1]$,\footnote{For notational convenience, we will sometimes treat the loss functions $\loss_t \colon V \to [0,1]$ as vectors with components $\loss_t(1),\ldots,\loss_t(K)$.} and a sequence of undirected graphs $(G_t)_{t\in[T]}$ over the set of actions $V$, that is, $G_t = (V,E_t)$.
At any time $t$, the learner selects an arm $I_t$ (possibly at random), then pays loss $\loss_t(I_t)$ and observes the feedback graph $G_t$ and all losses $\loss_t(i)$ of neighbouring actions $i\in \neigh{G_t}{I_t}$, where $\neigh{G_t}{i} = \{j\in V \,:\, (i,j)\in E_t\}$ (see Online Protocol \ref{protocol}).
In this work, we only focus on strongly observable graphs \cite{alon2015beyond}. An undirected graph $G$ is strongly observable if for every $i \in V$, at least one of the following holds: $i \in \neigh{G}{i}$ or $i \in \neigh{G}{j}$ for all $j \neq i$.

The performance of the learner is measured by the regret
\[
    R_T =
    \E \lsb{ \sum_{t=1}^T \loss_t(I_t) } - \min_{i\in[K]} \sum_{t=1}^T \loss_t(i) \enspace.
\]
where the expectation is over the learner's internal randomization.

{
\makeatletter
\renewcommand{\ALG@name}{Online Protocol}
\makeatother
\begin{algorithm}
\caption{Online learning with feedback graphs} \label{protocol}
    \begin{algorithmic}
        \State \textbf{environment:} (hidden) losses $\loss_t\colon V \to [0,1]$ and graphs $G_t = (V,E_t)$, for all $t = 1,\dots,T$
        \For{$t = 1, \dots$, $T$}
            \State The learner picks an action $I_t \in V$ (possibly at random)
            \State The learner incurs loss $\loss_t(I_t)$
            \State The learner observes losses $\bcb{ \brb{i,\loss_t(i)} : i \in \neigh{G_t}{I_t} }$ and graph $G_t$
        \EndFor
    \end{algorithmic}
\addtocounter{algorithm}{-1}
\end{algorithm}
}

We denote by $\Delta_K$ the simplex $\bcb{ p \in [0,1]^K : \lno{p}_1 = 1}$.
For any graph $G$, we define its independence number as the cardinality of the largest set of nodes such that no two nodes are neighbors, and denote it by $\alpha(G)$.
For simplicity, we use $N_t$ to denote the neighbourhood $N_{G_t}$ in the graph $G_t$ and we use $\alpha_t$ to denote the independence number $\alpha(G_t)$ of $G_t$ at time $t$.

\section{FTRL with Tsallis Entropy for Undirected Feedback Graphs} \label{s:main}

As a building block, in this section, we focus on the case when all the feedback graphs $G_1, \dots, G_T$ have the same independence number $\alpha_1 = \dots = \alpha_T = \alpha$, whereas the general case is treated in the next section.
For simplicity, we start with the assumption that all nodes have self-loops: $(i,i) \in E_t$ for all $i\in V$ and all $t$.
We later lift this requirement and show that the regret guarantees that we provide can be extended to general strongly observable undirected feedback graphs, only at the cost of a constant multiplicative factor.

The algorithm we analyze is $q$-FTRL (described in \Cref{alg:FTRL}), which is an instance of the follow the regularized leader (FTRL) framework---see, e.g., \cite[Chapter 7]{orabona2019modern}---with the (negative) $q$-Tsallis entropy 
\[
    \ftrlreg_\tparam(x) = \frac{1}{1-\tparam} \left(1 - \sum_{i \in V} x(i)^\tparam\right) \qquad \forall x \in \Delta_K \enspace,
\]
as the regularizer, whose parameter $q \in (0,1)$ can be tuned according to our needs.
Since we do not observe all the losses in a given round, the algorithm makes use of unbiased estimates for the losses.
When all self-loops are present, we define the estimated losses in the following standard manner.
Let $I_t$ be the action picked at round $t$, which is drawn from the distribution $p_t \in \Delta_K$ maintained by the algorithm, the loss estimate for an action $i \in V$ at round $t$ is given by
\begin{equation} \label{eq:unbiased-estimator-1}
    \hat{\loss}_t(i) = \frac{\loss_t(i)}{P_t(i)} \I\lrc{I_t \in \neigh{t}{i}} \enspace,
\end{equation}
where $P_t(i) = \mathbb{P}\brb{I_t \in \neigh{t}{i}} = \sum_{j \in \neigh{t}{i}} p_t(j)$.
This estimate is unbiased in the sense that $\E_t\bsb{\hat{\loss}_t(i)} = \loss_t(i)$ for all $t \in [T]$ and all $i \in V$, where we denote $\E_t\lsb{\cdot} = \E\lsb{\cdot \,|\, I_1, \dots, I_{t-1}}$.

\begin{algorithm}[ht]
    \caption{$\tparam$-FTRL for undirected feedback graphs} \label{alg:FTRL}
    \begin{algorithmic}
        \State \textbf{input:} $\tparam \in (0,1)$, $\eta > 0$
        \State \textbf{initialization:} $p_1(i) \gets 1/K$ for all $i=1,\ldots,K$
        \For{$t = 1, \ldots, T$}
            \State Select action $I_t \sim p_t$ and incur loss $\loss_{t}(I_t)$
            \State Observe losses $\bcb{\brb{i, \loss_{t}(i)} : i \in \neigh{t}{I_t}}$ and graph $G_t$
            \State Construct a loss estimate $\hat{\loss}_{t}$ for $\ell_t$ \Comment{e.g., \eqref{eq:unbiased-estimator-1} or~\eqref{eq:unbiased-estimator-2}}
            \State Let $p_{t+1} \gets \argmin_{p \in \Delta_{K}} \eta\ban{\sum_{s=1}^{t} \hat{\loss}_s, p} + \ftrlreg_\tparam(p)$
        \EndFor
    \end{algorithmic}
\end{algorithm}

A key part of the standard regret analysis of $\tparam$-FTRL (see, e.g., the proof of \Cref{lem:FTRL-Tsallis-bound} in \Cref{app:auxiliary}) is handling the variance term, which, with the choice of estimator given in \eqref{eq:unbiased-estimator-1}, takes the following form
\begin{equation}
    \tvar_t(\tparam) = \sum_{i \in V} \frac{p_t(i)^{2-\tparam}}{P_t(i)} \enspace.
\end{equation}
By H\"older's inequality, this term can be immediately upper bounded by
\[
    \tvar_t(q) \le \sum_{i \in V} p_t(i)^{1-\tparam} \le \Bbrb{\sum_{i \in V} p_t(i)}^{\!1-\tparam} \Bbrb{\sum_{i \in V} 1^{1/\tparam}}^{\!\tparam} = K^\tparam \enspace,
\]
while previous results on the regret analysis of multiarmed bandits with graph feedback~\cite{mannor2011side,alon-journal} would give
\[
    \tvar_t(q) \le \sum_{i \in V} \frac{p_t(i)}{P_t(i)} \le \alpha \enspace.
\]
However, the former result would only recover a $\calO(\sqrt{KT})$ regret bound (regardless of $\alpha$) with the best choice of $q=1/2$, 
which could be trivially achieved by ignoring side-observations of the losses, whereas the latter bound would only manage to achieve a $\calO(\sqrt{\alpha T\ln K})$ regret bound, incurring the extra $\sqrt{\ln K}$ factor for all values of $\alpha$.
Other results in the literature (e.g., see~\cite{alon2015beyond,alon2013bandits,dann2023blackbox,esposito2022learning,ito2022bobw,kocak2014efficient,rouyer2022,zimmert2019connections}) do not bring an improvement in this setting when bounding the $\tvar_t(q)$ term and, hence, do not suffice for achieving the desired regret bound.
The following lemma provides a novel and improved bound on quantities of the same form as $\tvar_t(q)$ in terms of the independence number $\alpha_t = \alpha$ of the undirected graph $G_t$.

\begin{lemma}\label{lem:turan-tsallis}
    Let $G = (V,E)$ be any undirected graph with $\abs{V}=K$ vertices and independence number $\alpha(G) = \alpha$.
    Let $b \in [0,1]$, $p \in \Delta_K$ and consider any nonempty subset $U \subseteq \lrc{v \in V : v \in \neigh{G}{v}}$.
    Then,
    \[
        \sum_{v \in U} \frac{p(v)^{1+b}}{\sum_{u \in \neigh{G}{v}} p(u)} \le \alpha^{1-b} \enspace.
    \]
\end{lemma}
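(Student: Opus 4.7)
The plan is to interpolate between two bounds one already has in hand: the simplex bound $\sum_{v \in U} p(v) \le 1$, valid for any $U$, and the classical Mannor--Shamir/Turán-type estimate $\sum_{v \in U} p(v)/P(v) \le \alpha$, valid precisely when every $v \in U$ has a self-loop. The exponent $b \in [0,1]$ in the target inequality suggests that a Hölder-style convex combination of these two bounds should deliver the factor $\alpha^{1-b}\cdot 1^b = \alpha^{1-b}$.

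Concretely, writing $P(v) := \sum_{u \in N_G(v)} p(u)$, I would first record the algebraic identity
\[
    \frac{p(v)^{1+b}}{P(v)} = \left(\frac{p(v)}{P(v)}\right)^{\!1-b}\left(\frac{p(v)^{2}}{P(v)}\right)^{\!b}.
\]
Since every $v \in U$ carries a self-loop, we have $v \in N_G(v)$, hence $P(v) \ge p(v)$ and therefore $p(v)^{2}/P(v) \le p(v)$. This yields
\[
    \frac{p(v)^{1+b}}{P(v)} \le \left(\frac{p(v)}{P(v)}\right)^{\!1-b} p(v)^{b}.
\]

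Next I would apply Hölder's inequality with conjugate exponents $1/(1-b)$ and $1/b$ (assuming $b \in (0,1)$; the endpoints $b=0$ and $b=1$ are handled directly by the two base bounds) to the sum over $U$:
\[
    \sum_{v \in U} \left(\frac{p(v)}{P(v)}\right)^{\!1-b} p(v)^{b}
    \;\le\; \Bigl(\sum_{v \in U} \tfrac{p(v)}{P(v)}\Bigr)^{\!1-b}\Bigl(\sum_{v \in U} p(v)\Bigr)^{\!b}.
\]
The second factor is at most $1^{b}=1$, and the first factor is at most $\alpha^{1-b}$ by the classical bound for undirected graphs with self-loops on $U$ (which, for self-completeness, one proves by iteratively extracting from $U$ a vertex of maximum $p(v)/P(v)$ to build an independent set, or by invoking the lemma already referenced in the excerpt as, e.g., \cite[Lemma~3]{mannor2011side}).

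The only real obstacle is arranging the exponents so that the two classical bounds can be combined; everything else is routine. A minor subtlety is that the self-loop hypothesis is used twice in the proof (once to get $p(v)^2/P(v)\le p(v)$, and once through the classical $\sum p(v)/P(v) \le \alpha$ bound, which genuinely requires self-loops in $U$), and this is precisely why the lemma restricts $U$ to the set of self-looped vertices.
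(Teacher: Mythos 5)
Your proof is correct, and it takes a genuinely different route from the paper. You reduce the general case to the classical $b=0$ bound: the pointwise factorization $\frac{p(v)^{1+b}}{P(v)} = \bigl(\frac{p(v)}{P(v)}\bigr)^{1-b}\bigl(\frac{p(v)^2}{P(v)}\bigr)^{b}$ together with $p(v)^2/P(v)\le p(v)$ (valid since the self-loop gives $P(v)\ge p(v)$) and H\"older with exponents $1/(1-b)$, $1/b$ yields $\bigl(\sum_{v\in U} p(v)/P(v)\bigr)^{1-b}\bigl(\sum_{v\in U} p(v)\bigr)^{b}\le \alpha^{1-b}$, i.e.\ an interpolation between the known $\le\alpha$ estimate and the simplex bound $\le 1$; the exponent arithmetic and the endpoint cases $b\in\{0,1\}$ all check out. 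The paper instead proves the lemma directly for every $b$ by a Tur\'an-style greedy argument: repeatedly extract a vertex maximizing $p(v)^b/P(v)$, bound the contribution of its removed neighbourhood by $p(v)^b$, and then bound $\sum_{v\in S}p(v)^b\le\alpha^{1-b}$ over the resulting independent set via Jensen. Your approach is more modular and makes the interpolating nature of the result transparent, at the price of invoking the classical lemma as a black box; the paper's argument is self-contained (its $b=0$ specialization is essentially the classical proof) and produces the slightly finer intermediate bound $\sum_{v\in S}p(v)^b$ for an explicit independent set $S$. One small point to tighten: the classical bound is usually stated as $\sum_{v\in V}p(v)/P(v)\le\alpha$ for graphs with all self-loops, so to quote it for your first H\"older factor you should note that the sum restricted to $U$ is handled either by adding self-loops to $V\setminus U$ (which changes neither $\alpha$ nor the denominators on $U$) and dropping nonnegative terms, or by passing to the induced subgraph $G[U]$ and using scale-invariance of the ratio sum — or simply by running the greedy extraction on $U$ as you indicate; this is a presentational detail rather than a gap.
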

\begin{proof}
    First of all, observe that we can restrict ourselves to the subgraph $G[U]$ induced by $U$, i.e., the graph $G[U] = (U, E \cap (U\times U))$.
    This is because the neighbourhoods in this graph are such that $\neigh{G[U]}{v} \subseteq \neigh{G}{v}$ for all $v \in U$, and its independence number is $\alpha(G[U]) \le \alpha(G)$.
    Hence, it suffices to prove the claimed inequality for any undirected graph $G = (V,E)$ with all self-loops, any $p \in [0,1]^K$ such that $\norm{p}_1 \le 1$, and the choice $U=V$.
    We assume this in what follows without loss of generality.
    
    For any subgraph $H \subseteq G$ with vertices $V(H) \subseteq V$, denote the quantity we want to upper bound by
    \[
        Q(H) = \sum_{v \in V(H)} \frac{p(v)^{1+b}}{\sum_{u \in \neigh{G}{v}} p(u)} \enspace.
    \]
    Our aim is thus to provide an upper bound to $Q(G)$.
    
    Consider a greedy algorithm that incrementally constructs a subset of vertices in the following way: at each step, it selects a vertex $v$ that maximizes $p(v)^b/\brb{\sum_{u \in \neigh{G}{v}} p(u)}$, it adds $v$ to the solution, and it removes $v$ from $G$ together with its neighbourhood $\neigh{G}{v}$.
    This step is iterated on the remaining graph until no vertex is left.

    Let $S = \lrc{v_1, \dots, v_s} \subseteq V$ be the solution returned by the above greedy algorithm on $G$.
    Also let $G_1, \dots, G_{s+1}$ be the sequence of graphs induced by the operations of the algorithm, where $G_1 = G$ and $G_{s+1}$ is the empty graph, and let $\neigh{r}{v} = \neigh{G_r}{v}$ for $v \in V(G_r)$.
    At every step $r \in [s]$ of the greedy algorithm, the contribution to $Q(G)$ of the removed vertices $\neigh{r}{v_r}$ amounts to
    \begin{align*}
        Q(G_r) - Q(G_{r+1})
        = \sum_{v \in \neigh{r}{v_r}} \frac{p(v)^{1+b}}{\sum_{u \in \neigh{1}{v}} p(u)}
        &\le \sum_{v \in \neigh{r}{v_r}} p(v) \frac{p(v_r)^b}{\sum_{u \in \neigh{1}{v_r}} p(u)} \\
        &\le \frac{\sum_{v \in \neigh{1}{v_r}} p(v)}{\sum_{u \in \neigh{1}{v_r}} p(u)} p(v_r)^b = p(v_r)^b \enspace,
    \end{align*}
    where the last inequality is due to the fact that $\neigh{i}{v} \subseteq \neigh{j}{v}$ for all $i\ge j$ and $v\in V_i$.
    Therefore, we can observe that
    \[
        Q(G) = \sum_{r=1}^s \brb{Q(G_r) - Q(G_{r+1})} \le \sum_{v \in S} p(v)^b \enspace.
    \]

    The solution $S$ is an independent set of $G$ by construction.
    Consider now any independent set $A \subseteq V$ of $G$.
    We have that
    \begin{align}
        \sum_{v \in A} p(v)^b
        &\le \max_{x \in \Delta_K} \sum_{v \in A} x(v)^b
        = \abs{A} \max_{x \in \Delta_K} \sum_{v \in A} \frac{x(v)^b}{\abs{A}} \nonumber\\
        &\le \abs{A} \max_{x \in \Delta_K} \bbrb{\frac{1}{\abs{A}}\sum_{v \in A} x(v)}^{\!b}
        \le \abs{A}^{1-b} \le \alpha^{1-b} \enspace,
    \end{align}
    where the second inequality follows by Jensen's inequality and the fact that $b \in [0,1]$.
\end{proof}

Observe that this upper bound is tight for general probability distributions $p \in \Delta_K$ over the vertices $V$ of any strongly observable undirected graph $G$ (containing at least one self-loop), as it is exactly achieved by the distribution $p^\star \in \Delta_K$ defined as $p^\star(i) = \frac{1}{\abs{S}} \I\lcb{i \in S}$ for some maximum independent set $S \subseteq V$ of $G$. Using this lemma, the following theorem provides our improved upper bound under the simplifying assumptions we made thus far.

\begin{theorem}\label{thm:regret-self-loops}
    Let $G_1, \dots, G_T$ be a sequence of undirected feedback graphs, where each $G_t$ contains all self-loops and has independence number $\alpha_t = \alpha$ for some common value $\alpha \in [K]$.
    If \Cref{alg:FTRL} is run with input
    \[
        \tparam = \frac12\bbrb{1 + \frac{\ln(K/\alpha)}{\sqrt{\ln(K/\alpha)^2+4} + 2}} \in [1/2,1)
        \qquad \text{ and } \qquad
        \eta = \sqrt{\frac{2\tparam K^{1-\tparam}}{T(1-\tparam)\alpha^\tparam}} \enspace,
    \]
    and loss estimates~\eqref{eq:unbiased-estimator-1}, then its regret satisfies $R_T \le 2\sqrt{e\alpha T \lrb{2 + \ln(K/\alpha)}}$
\end{theorem}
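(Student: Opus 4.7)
The plan is to combine a standard FTRL regret bound for the $q$-Tsallis regularizer with the variance bound from \Cref{lem:turan-tsallis}, then optimize over $\eta$, and finally verify that the stated closed form for $q$ achieves the target rate.

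First, I would invoke the standard FTRL analysis for $\ftrlreg_q$ (the content of \Cref{lem:FTRL-Tsallis-bound} in the appendix): since $\ftrlreg_q(u) - \ftrlreg_q(p_1) \le K^{1-q}/(1-q)$ for every $u \in \Delta_K$, and since $(\nabla^2 \ftrlreg_q(p_t))^{-1}$ is the diagonal matrix with entries $p_t(i)^{2-q}/q$, the local-norm stability bound combined with $\E_t\lsb{\hat{\loss}_t(i)^2} \le 1/P_t(i)$ yields
\[
R_T \le \frac{K^{1-q}}{\eta(1-q)} + \frac{\eta}{2q} \sum_{t=1}^T \E\lsb{\tvar_t(q)}.
\]
Because every vertex of each $G_t$ carries a self-loop, \Cref{lem:turan-tsallis} applies with $U=V$ and $b = 1-q \in (0,1]$, giving $\tvar_t(q) \le \alpha^q$ deterministically; the sum is thus bounded by $T\alpha^q$. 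Balancing the two terms over $\eta$ recovers exactly the $\eta$ stated in the theorem and the intermediate bound
\[
R_T \le \sqrt{\frac{2T\alpha\,(K/\alpha)^{1-q}}{q(1-q)}}.
\]

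It remains to show that the stated $q$ makes this at most $2\sqrt{e\alpha T(2+\ln(K/\alpha))}$. Writing $L = \ln(K/\alpha) \ge 0$, the closed-form expression for $q$ rearranges to $L(2q-1)+2 = \sqrt{L^2+4}$; squaring and simplifying then produces the key identity $Lq(1-q) = 2q-1$. From it I extract two clean bounds: first, $L(1-q) = (2q-1)/q = 2-1/q$, which lies in $[0,1]$ since $q \in [1/2,1)$, so $(K/\alpha)^{1-q} = \exp\brb{L(1-q)} \le e$; second, $1/(q(1-q)) = L/(2q-1) = \sqrt{L^2+4}+2 \le 2(L+2)$, using $\sqrt{L^2+4} \le L+2$ for $L \ge 0$. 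Plugging both bounds into the intermediate display yields $R_T \le \sqrt{4e\alpha T(2+L)}$, which is the claim.

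The main obstacle is exactly the algebra in the last step: recognizing that the unusual closed form for $q$ collapses to the single identity $Lq(1-q) = 2q-1$, and that this one identity controls both $(K/\alpha)^{1-q}$ and $1/(q(1-q))$ simultaneously; one must also check that the root lies in $[1/2,1)$ so that $\ftrlreg_q$ is well-defined and $2-1/q$ is nonnegative. The regimes $L=0$ (i.e., $\alpha=K$, giving $q=1/2$ and the bandit rate $\mathcal{O}(\sqrt{KT})$) and $L\to\infty$ (i.e., $\alpha=1$, giving $q\to 1$ and the experts rate $\mathcal{O}(\sqrt{T\ln K})$) serve as sanity checks and confirm that the single algorithm interpolates correctly between the two endpoints.
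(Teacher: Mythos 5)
Your proposal is correct and follows essentially the same route as the paper's proof: the FTRL bound with the $q$-Tsallis regularizer, the variance bound $B_t(q)\le\alpha^{q}$ from \Cref{lem:turan-tsallis} with $b=1-q$, the stated $\eta$ balancing the two terms, and then algebra showing the chosen $q$ yields $2\sqrt{e\alpha T(2+\ln(K/\alpha))}$. The only (cosmetic) difference is in the closing algebra: you distill the choice of $q$ into the identity $\ln(K/\alpha)\,q(1-q)=2q-1$, equivalently $1/(q(1-q))=\sqrt{\ln(K/\alpha)^2+4}+2$ and $(K/\alpha)^{1-q}\le e$, whereas the paper substitutes $q$ directly and bounds a single exponential expression; both computations are equivalent (just take care of the degenerate case $\alpha=K$, where $2q-1=0$, by verifying the identities directly rather than dividing by $2q-1$).
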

\begin{proof}
    One can verify that for any $i \in V$, the loss estimate $\hat{\loss}_t(i)$ defined in~\eqref{eq:unbiased-estimator-1} satisfies $\E_t\bsb{\hat{\loss}_t(i)^2} \leq 1/P_t(i)$. Hence, using also that $\E_t\bsb{\hat{\loss}_t(i)} = \loss_t(i)$, \Cref{lem:FTRL-Tsallis-bandits-bound} in \Cref{app:auxiliary} implies that
    \begin{align}
        R_T 
        &\le \frac{K^{1-\tparam}}{\eta(1-\tparam)} + \frac{\eta}{2\tparam} \sum_{t=1}^T \E\Bbsb{\sum_{i \in V} \frac{p_t(i)^{2-\tparam}}{P_t(i)}} \label{eq:FTRL-regret-with-variance-term} \\
        &\le \frac{K^{1-\tparam}}{\eta(1-\tparam)} + \frac{\eta}{2\tparam} \alpha^{\tparam}T \label{eq:FTRL-regret-bound-1} \enspace,
    \end{align}
    where the second inequality follows by \Cref{lem:turan-tsallis} with $b=1-\tparam$ since all actions $i \in V$ are such that $i \in \neigh{G}{i}$.
    Our choices for $\tparam$ and $\eta$ allow us to further upper bound the right-hand side of \eqref{eq:FTRL-regret-bound-1} by
    \begin{align*}
        \sqrt{\frac{2K^{1-\tparam}\alpha^{\tparam}}{q(1-q)}T}
        &= \sqrt{2T \exp\bbrb{1+\frac12 \ln\lr{\alpha K} - \frac12 \sqrt{\ln\lrb{K/\alpha}^2 + 4}} \bbrb{2 + \sqrt{\ln\lrb{K/\alpha}^2+4}}} \\
        &\le \sqrt{2e\alpha T \bbrb{2 + \sqrt{\ln\lrb{K/\alpha}^2+4}}}
        \le 2\sqrt{e\alpha T \sqrt{\ln\lrb{K/\alpha}^2+4}} \\
        &\le 2\sqrt{e\alpha T \lrb{2+\ln(K/\alpha)}}
        \enspace. \qedhere
    \end{align*}
\end{proof}

The regret bound achieved in the above theorem achieves the optimal regret bound for the experts setting (i.e., $\alpha=1$) and the bandits setting (i.e., $\alpha=K$) simultaneously. Moreover, it interpolates the intermediate cases for $\alpha$ ranging between $1$ and $K$, introducing the multiplicative logarithmic factor only for graphs with independence number strictly smaller than $K$. We remark that the chosen values of $q$ and $\eta$ do in fact minimize the right-hand side of \eqref{eq:FTRL-regret-bound-1}. 
Note that we relied on the knowledge of $\alpha$ to tune the parameter $q$. 
This is undesirable in general. We will show how to lift this requirement in \Cref{s:doubling}. The same comment applies to \Cref{thm:regret}, below.

We now show how to achieve the improved regret bound of \Cref{thm:regret-self-loops} in the case of strongly observable undirected feedback graphs where some self-loops may be missing; i.e., there may be actions $i \in V$ such that $i \notin \neigh{G}{i}$.
Using the loss estimator defined in \eqref{eq:unbiased-estimator-1} may lead to a large variance term due to the presence of actions without self-loops.
One approach to deal with this---see, e.g., \cite{zimmert2019connections} or \cite{luo2023highprobability}---is to suitably alter the loss estimates of these actions.

Define $S_t = \lcb{i \in V : i \notin \neigh{t}{i}}$ as the subset of actions without self-loops in the feedback graph $G_t$ at each time step $t \in [T]$.
The idea is that we need to carefully handle some action $i \in S_t$ only in the case when the probability $p_t(i)$ of choosing $i$ at round $t$ is sufficiently large, say, larger than $1/2$.
Define the set of such actions as $J_t = \lrc{i \in S_t : p_t(i) > 1/2}$ and observe that $\abs{J_t} \le 1$.
Similarly to \cite{zimmert2019connections}, define new loss estimates
\begin{equation} \label{eq:unbiased-estimator-2}
    \hat{\loss}_t(i) = \begin{cases}
        \frac{\loss_t(i)}{P_t(i)} \I\lcb{I_t \in \neigh{t}{i}} & \text{if $i \in V \setminus J_t$} \\[.5em]
        \frac{\loss_t(i)-1}{P_t(i)} \I\lcb{I_t \in \neigh{t}{i}} + 1 & \text{if $i \in J_t$}
    \end{cases}
\end{equation}
for which it still holds that $\E_t\bsb{\hat{\loss}_t} = \loss_t$ and that $\E_t\bsb{\hat{\loss}_t(i)^2} \le 1/P_t(i)$ for all $i \notin J_t$.
This change, along with the use of \Cref{lem:turan-tsallis} for the actions in $V \setminus S_t$, suffices in order to prove the following regret bound (see \Cref{app:main} for the proof) when the feedback graphs do not necessarily contain self-loops for all actions.

\begin{restatable}{rethm}{thmregret} \label{thm:regret}
    Let $G_1, \dots, G_T$ be a sequence of strongly observable undirected feedback graphs, where each $G_t$ has independence number $\alpha_t = \alpha$ for some common value $\alpha \in [K]$.
    If \Cref{alg:FTRL} is run with input
    \[
        \tparam = \frac12\bbrb{1 + \frac{\ln(K/\alpha)}{\sqrt{\ln(K/\alpha)^2+4} + 2}} \in [1/2,1)
        \qquad \text{ and } \qquad
        \eta = \frac13 \sqrt{\frac{2\tparam K^{1-\tparam}}{T(1-\tparam)\alpha^\tparam}} \enspace,
    \]
    and loss estimates~\eqref{eq:unbiased-estimator-2}, then its regret satisfies $R_T \le 6\sqrt{e\alpha T \lrb{2+\ln(K/\alpha)}}$.
\end{restatable}

\section{Adapting to Arbitrary Sequences of Graphs}
\label{s:doubling}
In the previous section, we assumed for simplicity that all the graphs have the same independence number. This independence number was then used to tune $\tparam$, the parameter of the Tsallis entropy regularizer used by the algorithm. In this section, we show how to extend our approach to the case when the independence numbers of the graphs are neither the same nor known a-priori by the learner. Had these independence numbers been known a-priori, one approach is to set $\tparam$ as in \Cref{thm:regret} but using the average independence number
\[\avgalpha = \frac{1}{T}\summ_{t=1}^T \alpha_t \enspace.\]
Doing so would allow us to achieve a $\calO\Bigl(\sqrt{\summ_{t=1}^T \alpha_t(1+\ln(K/\avgalpha))}\Bigr)$ regret bound. We now show that we can still recover a bound of the same order without prior knowledge of $\avgalpha$. 
For round $t$ and any fixed $\tparam \in [0,1]$, define
\[
    \tvarlps_t(q) = \sum_{i \in V \setminus S_t} \frac{p_t(i)^{2-\tparam}}{P_t(i)} \enspace.
\]
We know from \Cref{lem:turan-tsallis} that $\tvarlps_t(q)\leq \alpha_t^q$. Thus, we can leverage these observations and use a doubling trick (similar in principle to \cite{alon-journal}) to guess the value of $\avgalpha$. 
This approach is outlined in \Cref{alg:FTRL-doubling}. Starting with $r=0$ and $T_r=1$, the idea is to instantiate \Cref{alg:FTRL} at time-step $T_r$ with $\tparam$ and $\eta$ set as in \Cref{thm:regret} but with $2^r$ replacing the independence number. Then, at $t \geq T_r$, we increment $r$ and restart \Cref{alg:FTRL} only if
\begin{equation*}
    \frac{1}{T} \summ_{s=T_r}^t \tvarlps_s(\tparam_r)^{1/\tparam_r} > 2^{r+1},
\end{equation*}
since (again thanks to \Cref{lem:turan-tsallis}) the left-hand side of the above inequality is always upper bounded by $\avgalpha$. The following theorem shows that this approach essentially enjoys the same regret bound of \Cref{thm:regret} up to an additive $\log_2 \avgalpha$ term.

\begin{algorithm}
    \caption{$q$-FTRL for an arbitrary sequence of strongly observable undirected graphs} \label{alg:FTRL-doubling}
    \begin{algorithmic}
        \State \textbf{input:} Time horizon $T$
        \State \textbf{define:} For each $r \in \lcb{0, \dots, \floor{\log_2 K}}$,
        \[
        \tparam_r = \frac12 \bbrb{1 + \frac{\ln(K/2^r)}{\sqrt{\ln(K/2^r)^2+4} + 2}} \qquad \text{ and } \qquad \eta_r = \sqrt{\frac{2 \tparam_r K^{1-\tparam_r}}{11 T(1-\tparam_r)\lrb{2^r}^{\tparam_r}}}
        \]
        \State \textbf{initialization:} $T_0 \gets 1$, $r \gets 0$, instantiate \Cref{alg:FTRL} with $\tparam=\tparam_0$, $\eta=\eta_0$, and loss estimates \eqref{eq:unbiased-estimator-2}
        \For{$t = 1, \ldots, T$}
            \State Perform one step of the current instance of \Cref{alg:FTRL}
            \If{$\frac{1}{T} \sum_{s=T_r}^t \tvarlps_s(\tparam_r)^{1/\tparam_r} > 2^{r+1}$}
                \State $r \gets r+1$ 
                \State $T_r \gets t+1$
                \State Restart \Cref{alg:FTRL} with $\tparam=\tparam_r$, $\eta=\eta_r$, and loss estimates \eqref{eq:unbiased-estimator-2}
            \EndIf
        \EndFor
    \end{algorithmic}
\end{algorithm}

\begin{restatable}{rethm}{thmregretdoubling}\label{thm:regret-doubling}
Let $C = 4\sqrt{6 e}  \frac{\sqrt{\pi} + \sqrt{4-2\ln 2}}{\ln 2} $. Then, the regret of \Cref{alg:FTRL-doubling} satisfies
\[
    R_T \le C \sqrt{ \sum_{t=1}^T \alpha_t \bbrb{2 + \ln \bbrb{\frac{K} {\avgalpha}}} } + \log_2 \avgalpha \enspace.
\]
\end{restatable}
\begin{proof} [Proof sketch]
For simplicity, we sketch here the proof for the case when in every round $t$, all the nodes have self-loops; hence, $\tvarlps_t(q)=\tvar_t(q)$. See the full proof in \Cref{app:doubling}, which treats the general case in a similar manner. 
Let $n = \bce{\log_2 \avgalpha}$ and assume without loss of generality that $\avgalpha > 1$. Since \Cref{lem:turan-tsallis} implies that for any $r$ and $t$, $B_t(\tparam_r) \leq \alpha^{\tparam_r}_t$, we have as a consequence that for any $t \geq T_r$,
\[
    \frac{1}{T} \summ_{s=T_r}^t B_s(\tparam_r)^{1/\tparam_r} \leq \frac{1}{T} \summ_{s=T_r}^t \alpha_s \leq \avgalpha \leq 2^n \enspace. 
\]
Hence, the maximum value of $r$ that the algorithm can reach is $n-1$. In doing so, we will execute $n$ instances of \Cref{alg:FTRL}, each corresponding to a value of $r \in \{0,\dots,n-1\}$. 
For every such $r$, we upper bound the instantaneous regret at step $T_{r+1}-1$ (the step when the restarting condition is satisfied) by $1$, hence the added $\log_2 \avgalpha$ term in the regret bound. For the rest of the interval; namely, for $t \in [T_r,T_{r+1}-2]$, we have via \eqref{eq:FTRL-regret-with-variance-term} that the regret of \Cref{alg:FTRL} is bounded by
\begin{align} \label{eq:doubling-subregret-mainpaper}
    \frac{K^{1-q_r}}{\eta_r(1-q_r)} + \frac{\eta_r}{2q_r} \E \summ_{t=T_r}^{T_{r+1}-2}  B_t(q_r)\enspace.
\end{align}
Define $T_{r:r+1}=T_{r+1}-T_r-1$, and notice that
\begin{align*}
    \summ_{t=T_r}^{T_{r+1}-2}  B_t(\tparam_r)
    &\leq T_{r:r+1}   \bbrb{\frac{1}{T_{r:r+1}}  \summ_{t=T_r}^{T_{r+1}-2} B_t(\tparam_r)^{1/\tparam_r}}^{\tparam_r} \\
    &\leq T_{r:r+1}   \bbrb{\frac{T}{T_{r:r+1}}  2^{r+1}}^{\tparam_r} \leq 2 T \brb{2^r}^{\tparam_r} \enspace,
\end{align*}
where the first inequality follows due to Jensen's inequality since $\tparam_r \in (0,1)$, and the second follows from the restarting condition of \Cref{alg:FTRL-doubling}. After, plugging this back into \eqref{eq:doubling-subregret-mainpaper}, we can simply use the definitions of $\eta_r$ and $\tparam_r$ and bound the resulting expression in a similar manner to the proof of \Cref{thm:regret-self-loops}. Overall, we get that
\begin{align*}
    R_T &\leq  4\sqrt{3 e T} 
    \summ_{r=0}^{n-1} \sqrt{2^r \ln \brb{e^2 K 2^{-r}}} + \log_2 \avgalpha\enspace,
\end{align*}
from which the theorem follows by using \Cref{lem:doubling-sum} in \Cref{app:auxiliary}, which shows, roughly speaking, that the sum on the right-hand side is of the same order as its last term.
\end{proof}
Although \Cref{alg:FTRL-doubling} requires knowledge of the time horizon, this can be dealt with by applying a standard doubling trick on $T$ at the cost of a larger constant factor.
It is also noteworthy that the bound we obtained is of the form $\sqrt{T \avgalpha(1+\ln(K/\avgalpha))}$ and not $\sqrt{\summ_t \alpha_t(1+\ln(K/\alpha_t))}$. Although both coincide with the bound of \Cref{thm:regret} when $\alpha_t$ is the same for all time steps, the latter is smaller via the concavity of $x (1+\ln(K/x))$ in $x$.
It is not clear, however, whether there is a tuning of $q \in (0,1)$ that can achieve the second bound (even with prior knowledge of the entire sequence $\alpha_1, \dots, \alpha_T$ of independence numbers).

\section{An Improved Lower Bound via Multitask Learning} \label{s:lower-bound}
In this section we provide a new lower bound on the minimax regret showing that, apart from the bandits case, a logarithmic factor is indeed necessary in general. 
When the graph is fixed over time, it is known that a lower bound of order $\sqrt{\alpha T}$ holds for any value of $\alpha$ \cite{alon-journal,mannor2011side}. Whereas for the experts case ($\alpha=1$), the minimax regret is of order\footnote{As a lower bound, this is known to hold asymptotically as $K$ and $T$ grow. However, it can also be shown to hold non-asymptotically (though with worse leading constants); see \cite[Theorem 3.22]{Haussler-experts-lower-bound} or \cite[Theorem 3.6]{prediction-learning-games}.} $\sqrt{T \ln K}$ \cite{cesa1997use}. The following theorem provides, for the first time, a lower bound that interpolates between the two aforementioned bounds for the intermediate values of $\alpha$.
\begin{restatable}{rethm}{thmlowerbound} \label{thm:lower-bound}
    Pick any $K \ge 2$ and any $\alpha$
    such that $2 \leq \alpha \leq K$. Then, for any algorithm and for all $T \geq \frac{\alpha \log_{\alpha} K}{4\log(4/3)}$,
    there exists a sequence of losses and feedback graphs $G_1, \dots, G_T$ such that $\alpha(G_t) = \alpha$ for all $t=1, \dotsc, T$ and
    \begin{equation*}
       R_T \geq \frac{1}{18\sqrt{2}} \sqrt{\alpha T \log_{\alpha} K}.
    \end{equation*}
\end{restatable}
In essence, the proof of this theorem (see \Cref{app:lower-bound})
constructs a sequence of feedback graphs and losses that is equivalent to a hard instance of the multitask bandit problem (MTB) \cite{cesa2012combinatorial}, an important special case of combinatorial bandits with a convenient structure for proving lower bounds \cite{audibert2014combinatorial,cohen2017combinatorial,ito2019combinatorial}. We consider a variant of MTB in which, at the beginning of each round, the decision-maker selects an arm to play in each one of $M$ stochastic bandit games. Subsequently, the decision-maker only observes (and suffers) the loss of the arm played in a single randomly selected game. For proving the lower bound, we use a class of stationary stochastic adversaries (i.e., environments), each generating graphs and losses in a manner that simulates an MTB instance.

Fix $2 \le \alpha \le K = |V|$ and assume for simplicity that $\ngames= \log_{\alpha}K$ is an integer. 
We now construct an instance of online learning with time-varying feedback graphs $G_t = (V,E_t)$ with $\alpha(G_t) = \alpha$ that is equivalent to an MTB instance with $\ngames$ bandit games each containing $\alpha$ ``base actions''.
Since $K=\alpha^\ngames$, we can
uniquely identify each action in $V$ with a vector $a = \big(a(1),\ldots,a(\ngames)\big)$ in $[\alpha]^\ngames$.
The action $a_t \in V$ chosen by the learner at round $t$ is equivalent to a choice of base actions $a_t(1),\ldots,a_t(\ngames)$ in the $\ngames$ games.
The feedback graph at every round is sampled uniformly at random from a set of $\ngames$ undirected graphs $\{G^i\}_{i=1}^\ngames$, where $G^i=(V,E^i)$ is such that $(a,a')\in E^i$ if and only if $a(i)=a'(i)$. 
This means (see \Cref{f:three-graphs}) that each graph $G^i$ consists of $\alpha$ isolated cliques $\{C_{i,j}\}_{j=1}^\alpha$ such that an action $a$ belongs to clique $C_{i,j}$ if and only if $a(i)=j$. Clearly, the independence number of any such graph is $\alpha$.
Drawing feedback graph $G_t = G^i$ corresponds to the activation of game $i$ in the MTB instance.
Hence, choosing $a_t\in V$ with feedback graph $G_t=G^i$ is equivalent to playing base action $a_t(i)$ in game $i$ in the MTB.
As for the losses, we enforce that, given a feedback graph $G_t$, all actions that belong to the same clique of the feedback graph are assigned the same loss.
Namely, if $G_t=G^i$ and $a(i) = a'(i) = j$, then $\ell_t(a) = \ell_t(a')$, which can be seen as the loss $\loss_t(j)$ assigned to base action $j$ in game $G^i$.
To choose the distribution of the losses for the base actions, we apply the classic needle-in-a-haystack approach of \cite{auer1995gambling} over the $\ngames$ games.
More precisely, we construct a different environment for each action $a \in V$ in such a way that the distribution of the losses in each MTB game slightly favors (with a difference of a small $\epsilon>0$) the base action corresponding to $a$ in that game. 
The proof then proceeds similarly to, for example, the proof of Theorem 5 in \cite{audibert2014combinatorial} or Theorem 7 in \cite{eldowa2023information}.

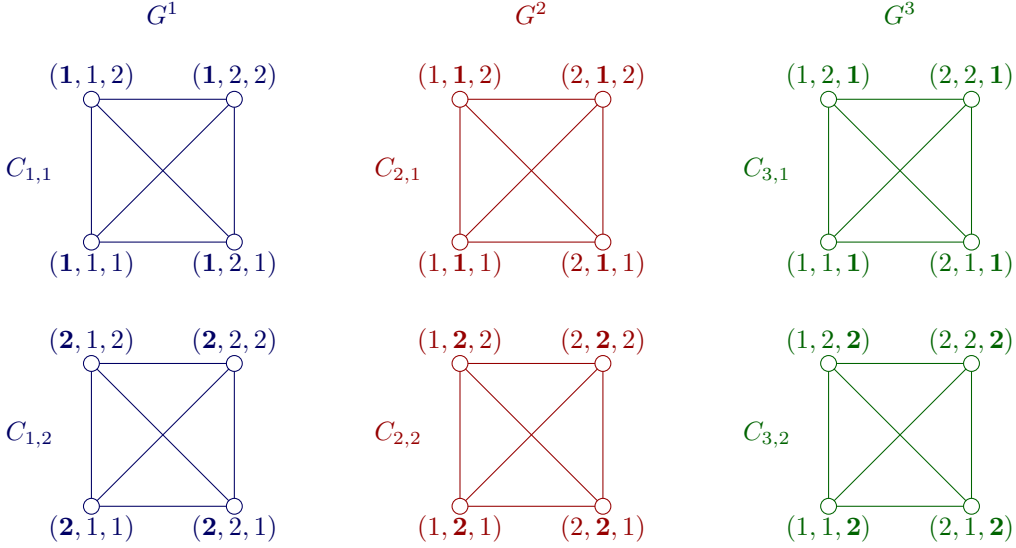
\begin{figure}
\centering
\newcommand{\xstretch}{1.9}
\newcommand{\xtranslation}{4.9}
\newcommand{\ystretch}{1.9}
\newcommand{\ytranslation}{1.85}
\newcommand{\positioning}{}
\newcommand{\mycolor}{}
    \definecolor{colone}{RGB}{0,0,100}
    \definecolor{coltwo}{RGB}{150,0,0} 
    \definecolor{colthree}{RGB} {0,100,0} 
\newcommand{\yo}{}
\newcommand{\nodesize}{3pt}

\begin{tikzpicture}

\foreach \g in {1,2,3} {
    \ifthenelse{\g = 1}{\renewcommand{\mycolor}{colone}}{}
    \ifthenelse{\g = 2}{\renewcommand{\mycolor}{coltwo}}{}
    \ifthenelse{\g = 3}{\renewcommand{\mycolor}{colthree}}{}
    \foreach \y in {0,1} {
        \draw[\mycolor] ({0*\xstretch+\g*\xtranslation}, {(0-\y*\ytranslation)*\ystretch}) 
           -- ({0*\xstretch+\g*\xtranslation}, {(1-\y*\ytranslation)*\ystretch});
        \draw[\mycolor] ({0*\xstretch+\g*\xtranslation}, {(0-\y*\ytranslation)*\ystretch}) 
           -- ({1*\xstretch+\g*\xtranslation}, {(1-\y*\ytranslation)*\ystretch});
        \draw[\mycolor] ({0*\xstretch+\g*\xtranslation}, {(0-\y*\ytranslation)*\ystretch}) 
           -- ({1*\xstretch+\g*\xtranslation}, {(0-\y*\ytranslation)*\ystretch});
        \draw[\mycolor] ({1*\xstretch+\g*\xtranslation}, {(0-\y*\ytranslation)*\ystretch}) 
           -- ({0*\xstretch+\g*\xtranslation}, {(1-\y*\ytranslation)*\ystretch});
        \draw[\mycolor] ({1*\xstretch+\g*\xtranslation}, {(0-\y*\ytranslation)*\ystretch}) 
           -- ({1*\xstretch+\g*\xtranslation}, {(1-\y*\ytranslation)*\ystretch});
        \draw[\mycolor] ({1*\xstretch+\g*\xtranslation}, {(1-\y*\ytranslation)*\ystretch}) 
           -- ({0*\xstretch+\g*\xtranslation}, {(1-\y*\ytranslation)*\ystretch});
    }
}

\foreach \g in {1,2,3} {
    \ifthenelse{\g = 1}{\renewcommand{\mycolor}{colone}}{}
    \ifthenelse{\g = 2}{\renewcommand{\mycolor}{coltwo}}{}
    \ifthenelse{\g = 3}{\renewcommand{\mycolor}{colthree}}{}
    \foreach \y in {0,1} {
        \foreach \i in {1,2} {
        \foreach \j in {1,2} {
            \ifthenelse{\g = 1}
            {
                \ifthenelse{\j=1}{\renewcommand{\positioning}{below}}{\renewcommand{\positioning}{above}}
                \draw[\mycolor] ({(\i-1)*\xstretch+\g*\xtranslation},{(\j-1-\y*\ytranslation)*\ystretch}) node[\positioning] {$(\boldsymbol{\ifthenelse{\y=0}{1}{2}},\i,\j)$};
            }{}
            \ifthenelse{\g = 2}
            {
                \ifthenelse{\j=1}{\renewcommand{\positioning}{below}}{\renewcommand{\positioning}{above}}
                \draw[\mycolor] ({(\i-1)*\xstretch+\g*\xtranslation},{(\j-1-\y*\ytranslation)*\ystretch}) node[\positioning] {$(\i,\boldsymbol{\ifthenelse{\y=0}{1}{2}},\j)$};
            }{}
            \ifthenelse{\g = 3}
            {
                \ifthenelse{\j=1}{\renewcommand{\positioning}{below}}{\renewcommand{\positioning}{above}}
                \draw[\mycolor] ({(\i-1)*\xstretch+\g*\xtranslation},{(\j-1-\y*\ytranslation)*\ystretch}) node[\positioning] {$(\i,\j,\boldsymbol{\ifthenelse{\y=0}{1}{2}})$};
            }{}

            \draw[\mycolor, fill=white] ({(\i-1)*\xstretch+\g*\xtranslation},{(\j-1-\y*\ytranslation)*\ystretch}) circle (\nodesize) node{};
        }
        }
        \ifthenelse{\y=0}{\renewcommand{\yo}{1}}{\renewcommand{\yo}{2}}
        \draw[\mycolor] ({(-0.2)*\xstretch+\g*\xtranslation},{(1-1-\y*\ytranslation)*\ystretch/2 + (2-1-\y*\ytranslation)*\ystretch/2}) node[left] {$C_{\g,\yo}$};
    }
    \draw[\mycolor] ({\xstretch/2+\g*\xtranslation}, {1.6*\ystretch}) node{$G^\g$};
}
\end{tikzpicture}
\caption{This figure shows an example of the multi-task bandit construction used to prove the lower bound. Here, $K=8$ and $\alpha=2$; thus, the number of games is $\ngames=3$. Each action is identified by a tuple of three numbers, each corresponding to a choice of one out of a pair of ``base actions'' in each game. Each of the three graphs in the figure corresponds to a game, such that two actions share an edge if and only if they choose the same base action in the corresponding game. At every round, a graph is randomly drawn, and all actions belonging to the same clique suffer the same loss.}
\label{f:three-graphs}
\end{figure}

While both our upper and lower bounds achieve the desired goal of interpolating between the minimax rates of experts and bandits, the logarithmic factors in the two bounds are not exactly matching. In particular, if we compare $1+\log_2(K/\alpha)$ and $\log_\alpha K$, we can see that although they coincide at $\alpha=2$ and $\alpha=K$, the former is larger for intermediate values.
It is reasonable to believe that the upper bound is of the correct order, seeing as it arose naturally as a result of choosing the best parameter for the Tsallis entropy regularizer, whereas achieving the extra logarithmic term in the lower bound required a somewhat contrived construction.

\section{Conclusions and Future Work}
In this work, we have shown that a proper tuning of the $q$-FTRL algorithm allows one to achieve a $\mathcal{O}\bigl(\sqrt{\alpha T(1+\ln(K/\alpha))}\bigr)$ regret for the problem of online learning with undirected strongly observable feedback graphs. Our bound interpolates between the minimax regret rates of the bandits and the experts problems, the two extremes of the strongly observable graph feedback spectrum. Furthermore, we have shown that an analogous bound can be achieved when the graphs vary over time, and without requiring any prior knowledge on the graphs. These results are complemented by our new lower bound of $\Omega\bigl(\sqrt{\alpha T (\ln K)/(\ln \alpha)}\bigr)$, which holds for $\alpha \geq 2$ and shows the necessity of a logarithmic factor in the minimax regret except for the bandits case. While our results provide the tightest characterization to date of the minimax rate for this setting, closing the small remaining gap (likely on the lower bound side) is an interesting problem.
After the submission of this manuscript, a subsequent work~\cite{chen2023interpolating} showed a lower bound for fixed feedback graphs composed of disjoint cliques that would imply worst-case optimality (up to constant factors) of our proposed algorithm for each pair of $K$ and~$\alpha$---see \Cref{app:comparison-concurrent} for a more detailed comparison with results therein.
Extending our results to the case of directed strongly observable feedback graphs is a considerably harder task---see \Cref{app:directed-setting} for a preliminary discussion.
Better understanding this more general setting is an interesting future direction.

\section*{Acknowledgements}
KE, EE, and NCB gratefully acknowledge the financial support from the MUR PRIN grant 2022EKNE5K (Learning in Markets and Society), funded by the NextGenerationEU program within the PNRR scheme (M4C2, investment 1.1), the FAIR (Future Artificial Intelligence Research) project, funded by the NextGenerationEU program within the PNRR-PE-AI scheme (M4C2, investment 1.3, line on Artificial Intelligence), and the EU Horizon CL4-2022-HUMAN-02 research and innovation action under grant agreement 101120237, project ELIAS (European Lighthouse of AI for Sustainability).
TC gratefully acknowledges the support of the University of Ottawa through grant GR002837 (Start-Up Funds) and that of the Natural Sciences and Engineering Research Council of Canada (NSERC) through grants RGPIN-2023-03688 (Discovery Grants Program) and DGECR-2023-00208 (Discovery Grants Program, DGECR - Discovery Launch Supplement).

\bibliographystyle{abbrv}
\DeclareRobustCommand{\VAN}[3]{#3}
\bibliography{main}

\begin{thebibliography}{10}

\bibitem{abernethy2015fighting}
J.~D. Abernethy, C.~Lee, and A.~Tewari.
\newblock Fighting bandits with a new kind of smoothness.
\newblock {\em Advances in Neural Information Processing Systems}, 28, 2015.

\bibitem{alon2015beyond}
N.~Alon, N.~Cesa{-}Bianchi, O.~Dekel, and T.~Koren.
\newblock Online learning with feedback graphs: Beyond bandits.
\newblock In P.~Gr{\"{u}}nwald, E.~Hazan, and S.~Kale, editors, {\em
  Proceedings of the 28th Conference on Learning Theory}, volume~40 of {\em
  {JMLR} Workshop and Conference Proceedings}, pages 23--35. JMLR.org, 2015.

\bibitem{alon-journal}
N.~Alon, N.~Cesa-Bianchi, C.~Gentile, S.~Mannor, Y.~Mansour, and O.~Shamir.
\newblock Nonstochastic multi-armed bandits with graph-structured feedback.
\newblock {\em SIAM Journal on Computing}, 46(6):1785--1826, 2017.

\bibitem{alon2013bandits}
N.~Alon, N.~Cesa-Bianchi, C.~Gentile, and Y.~Mansour.
\newblock From bandits to experts: A tale of domination and independence.
\newblock {\em Advances in Neural Information Processing Systems}, 26, 2013.

\bibitem{audibert2009minimax}
J.-Y. Audibert and S.~Bubeck.
\newblock Minimax policies for adversarial and stochastic bandits.
\newblock In {\em COLT}, volume~7, pages 1--122, 2009.

\bibitem{audibert2014combinatorial}
J.-Y. Audibert, S.~Bubeck, and G.~Lugosi.
\newblock Regret in online combinatorial optimization.
\newblock {\em Mathematics of Operations Research}, 39(1):31--45, 2014.

\bibitem{auer1995gambling}
P.~Auer, N.~Cesa-Bianchi, Y.~Freund, and R.~E. Schapire.
\newblock Gambling in a rigged casino: The adversarial multi-armed bandit
  problem.
\newblock In {\em Proceedings of IEEE 36th annual foundations of computer
  science}, pages 322--331. IEEE, 1995.

\bibitem{auer2002nonstochastic}
P.~Auer, N.~Cesa-Bianchi, Y.~Freund, and R.~E. Schapire.
\newblock The nonstochastic multiarmed bandit problem.
\newblock {\em SIAM journal on computing}, 32(1):48--77, 2002.

\bibitem{cesa1997use}
N.~Cesa-Bianchi, Y.~Freund, D.~Haussler, D.~P. Helmbold, R.~E. Schapire, and
  M.~K. Warmuth.
\newblock How to use expert advice.
\newblock {\em Journal of the ACM (JACM)}, 44(3):427--485, 1997.

\bibitem{cesa1993use}
N.~Cesa-Bianchi, Y.~Freund, D.~P. Helmbold, D.~Haussler, R.~E. Schapire, and
  M.~K. Warmuth.
\newblock How to use expert advice.
\newblock In {\em Proceedings of the 25th Annual ACM Symposium on Theory of
  Computing}, pages 382--391, 1993.

\bibitem{prediction-learning-games}
N.~Cesa-Bianchi and G.~Lugosi.
\newblock {\em Prediction, Learning, and Games}.
\newblock Cambridge University Press, 2006.

\bibitem{cesa2012combinatorial}
N.~Cesa-Bianchi and G.~Lugosi.
\newblock Combinatorial bandits.
\newblock {\em Journal of Computer and System Sciences}, 78(5):1404--1422,
  2012.

\bibitem{cerf-bound}
S.-H. Chang, P.~C. Cosman, and L.~B. Milstein.
\newblock Chernoff-type bounds for the {G}aussian error function.
\newblock {\em IEEE Transactions on Communications}, 59(11):2939--2944, 2011.

\bibitem{chen2023interpolating}
H.~Chen, Y.~He, and C.~Zhang.
\newblock On interpolating experts and multi-armed bandits.
\newblock {\em arXiv preprint arXiv:2307.07264}, 2023.

\bibitem{cohen2017combinatorial}
A.~Cohen, T.~Hazan, and T.~Koren.
\newblock Tight bounds for bandit combinatorial optimization.
\newblock In {\em Conference on Learning Theory}, pages 629--642. PMLR, 2017.

\bibitem{dann2023blackbox}
C.~Dann, C.~Wei, and J.~Zimmert.
\newblock A blackbox approach to best of both worlds in bandits and beyond.
\newblock In {\em The Thirty Sixth Annual Conference on Learning Theory},
  volume 195 of {\em Proceedings of Machine Learning Research}, pages
  5503--5570. {PMLR}, 2023.

\bibitem{eldowa2023information}
K.~Eldowa, N.~Cesa-Bianchi, A.~M. Metelli, and M.~Restelli.
\newblock Information-theoretic regret bounds for bandits with fixed expert
  advice.
\newblock In {\em {IEEE} Information Theory Workshop}, pages 30--35. {IEEE},
  2023.

\bibitem{erez2021towards}
L.~Erez and T.~Koren.
\newblock Towards best-of-all-worlds online learning with feedback graphs.
\newblock {\em Advances in Neural Information Processing Systems},
  34:28511--28521, 2021.

\bibitem{esposito2022learning}
E.~Esposito, F.~Fusco, D.~{\VAN{Hoeven}{Van der}{van der}}~Hoeven, and
  N.~Cesa-Bianchi.
\newblock Learning on the edge: Online learning with stochastic feedback
  graphs.
\newblock In {\em Advances in Neural Information Processing Systems},
  volume~35, pages 34776--34788, 2022.

\bibitem{Haussler-experts-lower-bound}
D.~Haussler, J.~Kivinen, and M.~Warmuth.
\newblock Sequential prediction of individual sequences under general loss
  functions.
\newblock {\em IEEE Transactions on Information Theory}, 44(5):1906--1925,
  1998.

\bibitem{ito2019combinatorial}
S.~Ito, D.~Hatano, H.~Sumita, K.~Takemura, T.~Fukunaga, N.~Kakimura, and K.-I.
  Kawarabayashi.
\newblock Improved regret bounds for bandit combinatorial optimization.
\newblock {\em Advances in Neural Information Processing Systems}, 32, 2019.

\bibitem{ito2022bobw}
S.~Ito, T.~Tsuchiya, and J.~Honda.
\newblock Nearly optimal best-of-both-worlds algorithms for online learning
  with feedback graphs.
\newblock {\em Advances in Neural Information Processing Systems},
  35:28631--28643, 2022.

\bibitem{jin2023improved}
T.~Jin, J.~Liu, and H.~Luo.
\newblock Improved best-of-both-worlds guarantees for multi-armed bandits:
  {FTRL} with general regularizers and multiple optimal arms.
\newblock {\em arXiv preprint arXiv:2302.13534}, 2023.

\bibitem{karp1972}
R.~M. Karp.
\newblock Reducibility among combinatorial problems.
\newblock In R.~E. Miller and J.~W. Thatcher, editors, {\em Proceedings of a
  Symposium on the Complexity of Computer Computations}, pages 85--103. Plenum
  Press, 1972.

\bibitem{kocak2014efficient}
T.~Koc{\'{a}}k, G.~Neu, M.~Valko, and R.~Munos.
\newblock Efficient learning by implicit exploration in bandit problems with
  side observations.
\newblock In {\em Advances in Neural Information Processing Systems},
  volume~27, pages 613--621, 2014.

\bibitem{kwon2016gains}
J.~Kwon and V.~Perchet.
\newblock Gains and losses are fundamentally different in regret minimization:
  {T}he sparse case.
\newblock {\em Journal of Machine Learning Research}, 17(227):1--32, 2016.

\bibitem{lattimore2020bandit}
T.~Lattimore and C.~Szepesv{\'a}ri.
\newblock {\em Bandit algorithms}.
\newblock Cambridge University Press, 2020.

\bibitem{luo2023highprobability}
H.~Luo, H.~Tong, M.~Zhang, and Y.~Zhang.
\newblock Improved high-probability regret for adversarial bandits with
  time-varying feedback graphs.
\newblock In {\em International Conference on Algorithmic Learning Theory},
  pages 1074--1100. PMLR, 2023.

\bibitem{mannor2011side}
S.~Mannor and O.~Shamir.
\newblock From bandits to experts: On the value of side-observations.
\newblock {\em Advances in Neural Information Processing Systems}, 24, 2011.

\bibitem{orabona2019modern}
F.~Orabona.
\newblock A modern introduction to online learning.
\newblock {\em CoRR}, abs/1912.13213, 2019.

\bibitem{rouyer2020tsallis}
C.~Rouyer and Y.~Seldin.
\newblock Tsallis-inf for decoupled exploration and exploitation in multi-armed
  bandits.
\newblock In {\em Conference on Learning Theory}, pages 3227--3249. PMLR, 2020.

\bibitem{rouyer2022}
C.~Rouyer, D.~van~der Hoeven, N.~Cesa-Bianchi, and Y.~Seldin.
\newblock A near-optimal best-of-both-worlds algorithm for online learning with
  feedback graphs.
\newblock In {\em Advances in Neural Information Processing Systems}, 2022.

\bibitem{seldin2016lower}
Y.~Seldin and G.~Lugosi.
\newblock A lower bound for multi-armed bandits with expert advice.
\newblock In {\em 13th European Workshop on Reinforcement Learning (EWRL)},
  volume~2, page~7, 2016.

\bibitem{vural2019minimax}
N.~M. Vural, H.~Gokcesu, K.~Gokcesu, and S.~S. Kozat.
\newblock Minimax optimal algorithms for adversarial bandit problem with
  multiple plays.
\newblock {\em IEEE Transactions on Signal Processing}, 67(16):4383--4398,
  2019.

\bibitem{zimmert2019connections}
J.~Zimmert and T.~Lattimore.
\newblock Connections between mirror descent, {T}hompson sampling and the
  information ratio.
\newblock {\em Advances in Neural Information Processing Systems}, 32, 2019.

\bibitem{zimmert2021tsallis}
J.~Zimmert and Y.~Seldin.
\newblock Tsallis-inf: An optimal algorithm for stochastic and adversarial
  bandits.
\newblock {\em The Journal of Machine Learning Research}, 22(1):1310--1358,
  2021.

\end{thebibliography}
\DeclareRobustCommand{\VAN}[3]{#2}

\newpage
\appendix

\section{Auxiliary Results} \label{app:auxiliary}

\begin{lemma} \label{lem:FTRL-Tsallis-bandits-bound}
    If \Cref{alg:FTRL} is run with $q\in(0,1)$, learning rate $\eta > 0$, and non-negative loss estimates that satisfy $\E_t\bsb{\hat{\loss}_t} = \loss_t$ for all $t=1,\dots,T$, then its regret satisfies
    \begin{equation*}
        R_T \leq \frac{ K^{1-q}}{(1-q)\eta} + \frac{\eta}{2q} \sum_{t=1}^T \E \lsb{ \sum_{i \in V} p_t(i)^{2-q} \:\hat{\loss}_t(i)^2 } \enspace.
    \end{equation*}
\end{lemma}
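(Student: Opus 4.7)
The plan is to apply the standard follow-the-regularized-leader analysis with local-norm stability, specialized to the $q$-Tsallis entropy regularizer $\ftrlreg_q$, and then take expectation using the unbiasedness of the loss estimates. Concretely, I would invoke the textbook FTRL lemma (see, e.g., \cite[Chapter 7]{orabona2019modern}) to obtain, for every $u \in \Delta_K$,
\[
\eta \sum_{t=1}^T \langle \hat{\loss}_t, p_t - u \rangle \le \ftrlreg_q(u) - \min_{p \in \Delta_K} \ftrlreg_q(p) + \eta \sum_{t=1}^T \langle \hat{\loss}_t, p_t - p_{t+1} \rangle,
\]
which cleanly separates the sum of instantaneous ``surrogate'' regrets into a range term and a per-round stability term.

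For the range term, I would use that on $\Delta_K$ the function $\ftrlreg_q$ attains its maximum $0$ at the vertices and its minimum $(1 - K^{1-q})/(1-q)$ at the uniform distribution, so
\[
\ftrlreg_q(u) - \min_{p \in \Delta_K} \ftrlreg_q(p) \le \frac{K^{1-q} - 1}{1-q} \le \frac{K^{1-q}}{1-q}.
\]
For the stability term, I would exploit that $\nabla^2 \ftrlreg_q(x) = q \cdot \mathrm{diag}\!\left(x_i^{q-2}\right)$, so $\ftrlreg_q$ is locally strongly convex at $p_t$ with respect to the norm $\|v\|_{p_t}^2 = q \sum_i p_t(i)^{q-2} v_i^2$, whose dual norm evaluates the loss estimate as $q^{-1} \sum_i p_t(i)^{2-q} \hat{\loss}_t(i)^2$. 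The standard local-norm stability argument then yields, per round,
\[
\langle \hat{\loss}_t, p_t - p_{t+1} \rangle \le \frac{\eta}{2q} \sum_{i \in V} p_t(i)^{2-q}\, \hat{\loss}_t(i)^2.
\]

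Combining the three pieces, dividing through by $\eta$, taking expectation, and using $\E_t[\hat{\loss}_t] = \loss_t$ to replace $\langle \hat{\loss}_t, p_t - u \rangle$ with $\langle \loss_t, p_t - u \rangle$ on the left, the claim follows by choosing $u$ as the indicator of a best-in-hindsight action, which is deterministic given the loss sequence and hence may be pulled outside the expectation over the algorithm's randomness.

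The delicate point is the local-norm step: in general, strong convexity would give a stability bound involving the Hessian at some intermediate point along the segment from $p_t$ to $p_{t+1}$, not at $p_t$ itself. Non-negativity of $\hat{\loss}_t$ is what allows the reference point to be pinned at $p_t$, via a standard argument (cf.\ the Tsallis-INF analysis of \cite{abernethy2015fighting}) controlling the ratio $p_{t+1}(i)/p_t(i)$ from below. Everything else is a routine computation; one must merely check that this step carries through uniformly in $q \in (0,1)$, not only at the familiar value $q = 1/2$.
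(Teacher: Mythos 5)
Your plan follows the same route as the paper's proof: reduce $R_T$ to $\mathbb{E}\bigl[\sum_{t=1}^T \langle p_t - \mathbf{e}_{i^*}, \hat{\ell}_t\rangle\bigr]$ via unbiasedness and the tower rule, then apply the standard FTRL range-plus-stability decomposition for the $q$-Tsallis regularizer with the local norm evaluated at $p_t$ (this is exactly Lemma~\ref{lem:FTRL-Tsallis-bound} in the paper). The range-term computation and the final expectation step are fine.

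The gap is in the step you yourself flag as delicate. You assert that non-negativity of $\hat{\ell}_t$ lets one control the ratio $p_{t+1}(i)/p_t(i)$ from below; that is neither available here nor what is needed. The loss estimates are unbounded importance-weighted quantities, so no multiplicative stability $p_{t+1}(i) \ge c\,p_t(i)$ holds in general (a single huge estimate on arm $i$ can crush $p_{t+1}(i)$ arbitrarily), and arguments that do establish such ratio control require extra conditions on $\eta$ or on the magnitude of the estimates, which the lemma does not assume. The paper's proof of Lemma~\ref{lem:FTRL-Tsallis-bound} (following Theorem~30.2 of Lattimore--Szepesv\'ari) handles the intermediate point differently: it sets $\overline{y}_t(i) = \mathbb{I}\{p_{t+1}(i) \le p_t(i)\}\, y_t(i)$, notes that the discarded coordinates contribute non-positively to $\langle y_t, p_t - p_{t+1}\rangle$ precisely because $y_t \ge 0$, and observes that on the kept coordinates the intermediate point $z_t$ of the Bregman/Taylor expansion satisfies $z_t(i) \le p_t(i)$, so the dual-norm weight $z_t(i)^{2-q}$ is at most $p_t(i)^{2-q}$ (the paper phrases this via convexity of $x \mapsto x^{2-q}$ and the sign of $p_{t+1}(i)^{2-q}-p_t(i)^{2-q}$). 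Coordinates with $p_{t+1}(i) > p_t(i)$ genuinely occur (an arm whose estimate is zero while others are large will gain mass), which is exactly why they must be discarded rather than ratio-controlled. A smaller point: the per-round bound $\langle \hat{\ell}_t, p_t - p_{t+1}\rangle \le \frac{\eta}{2q}\sum_i p_t(i)^{2-q}\hat{\ell}_t(i)^2$ with the factor $\frac{1}{2}$ comes from retaining the term $-\frac{1}{\eta}D_{\psi_q}(p_{t+1},p_t)$ and applying Fenchel--Young; your displayed FTRL inequality has already dropped that term, so as written you would lose a factor of $2$ relative to the constant claimed in the lemma.
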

\begin{proof}
    Let $i^* \in \argmin_{i \in V} \sum_{t=1}^T \loss_t(i)$ be an action that minimizes the cumulative loss, and let $\mathbf{e}_{i^*} \in \R^K$ be an indicator vector for $i^*$. Recall that for $t \in [T]$, $\E_t[\cdot] = \E[\cdot \mid I_1,\dots,I_{t-1}]$, and notice that $p_t$ is measurable with respect to the $\sigma$-algebra generated by $I_1,\dots,I_{t-1}$. Hence, using that 
    \[
        \E_t\bsb{\loss_t(I_t)} = \sum_{i \in V} p_t(i) \loss_t(i)
        \qquad \text{ and } \qquad
        \E_t\bsb{\hat{\loss}_t} = \loss_t \enspace,
    \]
    we have, via the tower rule and the linearity of expectation, that
    \[
        R_T = \E \lsb{ \sum_{t=1}^T \loss_t(I_t) } - \sum_{t=1}^T \loss_t(i^*)
        = \E \lsb{ \sum_{t=1}^T \langle p_t - \mathbf{e}_{i^*}, \loss_t \rangle}
        = \E \lsb{ \sum_{t=1}^T \langle p_t - \mathbf{e}_{i^*}, \hat{\loss}_t \rangle},
    \]
    from which we can obtain the desired result by using \Cref{lem:FTRL-Tsallis-bound} (which holds even if the loss $\hat{\loss}_t$ at each round $t \in [T]$ depends on the prediction $p_t$ made at that round).
\end{proof}

\begin{lemma} \label{lem:FTRL-Tsallis-bound}
    Let $q\in(0,1)$, $\eta > 0$, and $(y_t)_{t=1}^T$ be an arbitrary sequence of non-negative loss vectors in $\R^K$. Let $(p_t)_{t=1}^{T+1}$ be the predictions of FTRL with decision set $\Delta_K$ and the $q$-Tsallis regularizer~$\ftrlreg_{\tparam}$ over this sequence of losses. That is, $p_1 = \argmin_{p \in \Delta_{K}}  \ftrlreg_\tparam(p)$, and for $t \in [T]$,
    \[
     p_{t+1} = \argmin_{p \in \Delta_{K}} \eta \sum_{s=1}^{t} \ban{ y_s, p} + \ftrlreg_\tparam(p) \enspace.
    \]
    Then for any $u \in \Delta_{K}$,
    \begin{equation*}
        \sum_{t=1}^T \langle p_t - u, y_t \rangle \leq \frac{ K^{1-q}}{(1-q)\eta} + \frac{\eta}{2q} \sum_{t=1}^T  \sum_{i \in V} p_t(i)^{2-q} \:y_t(i)^2 \enspace.
    \end{equation*}
\end{lemma}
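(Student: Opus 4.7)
The plan is to follow the standard FTRL regret analysis, specialized to the $q$-Tsallis regularizer. First, I would apply the well-known be-the-leader decomposition: setting $F_t(p) = \eta \sum_{s < t} \langle y_s, p \rangle + \Psi_q(p)$, so that $p_t = \argmin_{p \in \Delta_{K}} F_t(p)$, an induction or telescoping argument yields, for every $u \in \Delta_K$,
\begin{equation*}
    \sum_{t=1}^T \langle p_t - u, y_t \rangle \leq \frac{\Psi_q(u) - \Psi_q(p_1)}{\eta} + \sum_{t=1}^T \langle p_t - p_{t+1}, y_t \rangle.
\end{equation*}
This splits the regret into a penalty term and a stability term, which I would bound separately.

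The penalty term is straightforward. Since $p_1$ is the uniform distribution (the unique minimizer of $\Psi_q$ on $\Delta_K$), one has $\Psi_q(p_1) = (1 - K^{1-q})/(1-q)$. For any $u \in \Delta_K$, the fact that $q \in (0,1)$ and $u(i) \in [0,1]$ gives $u(i)^q \geq u(i)$, hence $\sum_i u(i)^q \geq 1$ and $\Psi_q(u) \leq 0$. Dividing the resulting difference by $\eta$ yields the first term of the claim.

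For the stability term, I would exploit the local strong convexity of $\Psi_q$, whose diagonal Hessian is $\nabla^2 \Psi_q(p) = q \cdot \mathrm{diag}\brb{p(i)^{q-2}}$. Combining the first-order optimality conditions for $p_t$ and $p_{t+1}$ on the simplex yields, via a short Bregman-divergence computation, the inequality $D_{\Psi_q}(p_t, p_{t+1}) \leq \eta \langle y_t, p_t - p_{t+1} \rangle$. Switching to the Fenchel dual perspective, $\Psi_q^*$ is locally smooth with Hessian $q^{-1}\,\mathrm{diag}\brb{p(i)^{2-q}}$ at the dual point $\nabla \Psi_q(p_t)$, and the standard mirror-descent smoothness inequality then produces
\begin{equation*}
    \langle p_t - p_{t+1}, y_t \rangle \leq \frac{\eta}{2q} \sum_{i \in V} p_t(i)^{2-q}\,y_t(i)^2,
\end{equation*}
which, after summing over $t$, delivers the second term of the claim.

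The main obstacle I expect lies in this last step: the local dual-smoothness bound must hold along the entire step from $\nabla \Psi_q(p_t)$ to $\nabla \Psi_q(p_{t+1})$, not merely infinitesimally. Here the non-negativity of $y_t$ is used crucially: since $\nabla \Psi_q(p)_i = -q\,p(i)^{q-1}/(1-q)$ is monotone in $p(i)$, a nonnegative shift of the dual variables forces the unconstrained mirror-descent iterate to satisfy $\tilde{p}_{t+1}(i) \leq p_t(i)$ coordinatewise, which keeps the Hessian of $\Psi_q^*$ dominated by its value at $p_t$ throughout the relevant segment. The simplex projection is then absorbed using a generalized Pythagorean inequality for Bregman divergences, so it does not inflate the stability constant.
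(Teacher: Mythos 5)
Your overall skeleton (be-the-leader decomposition, penalty term, local-norm stability bound) is the right family of argument, and your dual-monotonicity observation (nonnegative losses push the dual point down, so the Hessian of $\psi_q^*$ along the segment is dominated by its value at $p_t$) is a valid substitute for the paper's clipping device. But there is a genuine gap: you discard the negative Bregman-divergence terms from the FTRL bound and then claim the per-round stability inequality $\langle p_t-p_{t+1},y_t\rangle \le \frac{\eta}{2q}\sum_i p_t(i)^{2-q}y_t(i)^2$, and this inequality is false in general --- it is off by a factor approaching $2$. To see it, note that for small losses the dual step gives, coordinatewise, $p_t(i)-\tilde p_{t+1}(i)=\eta y_t(i)\,(\nabla^2\psi_q)^{-1}_{ii}$ evaluated at an intermediate point, so $\langle p_t-\tilde p_{t+1},y_t\rangle \approx \frac{\eta}{q}\sum_i p_t(i)^{2-q}y_t(i)^2$, twice your claim; and the simplex normalization does not in general restore the missing $\tfrac12$. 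Concretely, with $K=2$, $q=\tfrac12$, $p_t=(1-\delta,\delta)$ and $y_t=(0,c)$ with $c$ small, a first-order computation of the constrained update gives $\langle p_t-p_{t+1},y_t\rangle \approx \frac{\eta}{q}\frac{w_1w_2}{w_1+w_2}c^2$ with $w_i=p_t(i)^{2-q}$, which exceeds $\frac{\eta}{2q}w_2c^2$ by a factor tending to $2$ as $\delta\to0$. Your appeal to a ``generalized Pythagorean inequality'' cannot repair this, since that inequality controls divergences to a fixed comparator, not inner products with $y_t$; what is true (by your monotonicity argument, $p_{t+1}\ge\tilde p_{t+1}$ coordinatewise) is only the weaker bound with constant $\frac{\eta}{q}$.

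The factor $\tfrac12$ in the statement comes precisely from the term you dropped: the paper starts from the refined FTRL bound (Theorem~28.5 of Lattimore--Szepesv\'ari), which keeps $\langle p_t-p_{t+1},y_t\rangle-\frac1\eta D_{\psi_q}(p_{t+1},p_t)$ in each round, and then applies Fenchel--Young against this negative divergence, using the clipped losses $\bar y_t(i)=y_t(i)$ only on coordinates where $p_{t+1}(i)\le p_t(i)$ (made possible by $y_t\ge 0$) so that the Taylor midpoint's Hessian is dominated by that at $p_t$. If you retain the divergence term, your dual-segment argument can replace the clipping step and the rest of your proof goes through; if you insist on the plain stability decomposition, you only obtain the lemma with $\frac{\eta}{q}$ in place of $\frac{\eta}{2q}$, which proves a weaker statement (harmless for the rates downstream, but not the lemma as stated).
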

\begin{proof}
    By Theorem~28.5 in \cite{lattimore2020bandit}, we have that
    \begin{align*}
         \sum_{t=1}^T \langle p_t - u, y_t \rangle &\leq \frac{\psi_q(u)-\psi_q(p_1)}{\eta}+ \sum_{t=1}^T  \bbrb{  \langle p_t - p_{t+1}, y_t \rangle - \frac{1}{\eta} D_{\psi_q}(p_{t+1},p_t)} \\
        &=  \frac{ K^{1-q}- 1}{(1-q)\eta}+\sum_{t=1}^T  \bbrb{ \langle p_t - p_{t+1}, y_t \rangle - \frac{1}{\eta} D_{\psi_q}(p_{t+1},p_t)} \\
        &\leq \frac{ K^{1-q}}{(1-q)\eta}+\sum_{t=1}^T \bbrb{ \langle p_t - p_{t+1}, y_t \rangle - \frac{1}{\eta} D_{\psi_q}(p_{t+1},p_t)}\enspace,
    \end{align*}
    where $D_{\psi_q}(\cdot,\cdot)$ is the Bregman divergence based on $\psi_q$. For bounding each summand in the second term, we follow a similar argument to that used in Theorem 30.2 in \cite{lattimore2020bandit}. Namely, for each $i \in V$ and round $t \in [T]$, define $\overline{y}_t(i)=\I\{p_{t+1}(i) \leq p_t(i)\}y_t(i)$. We then have that
    \begin{align*}
        \langle p_t - p_{t+1}, &y_t \rangle - \frac{1}{\eta} D_{\psi_q}(p_{t+1},p_t)\\
        &\quad\leq \langle p_t - p_{t+1}, \overline{y}_t \rangle - \frac{1}{\eta} D_{\psi_q}(p_{t+1},p_t) \\
        &\quad= \frac{1}{\eta} \langle p_t - p_{t+1}, \eta \overline{y}_t \rangle - \frac{1}{2\eta} \bno{p_{t+1}-p_t}^2_{\nabla^2\psi_q(z_t)}\\
        &\quad\leq \frac{\eta}{2} \bno{\overline{y}_t}^2_{(\nabla^2\psi_q(z_t))^{-1}} \\
        &\quad= \frac{\eta}{2q} \sum_{i \in V} z_t(i)^{2-q}\: \overline{y}_t(i)^2 \\
        &\quad= \frac{\eta}{2q} \sum_{i \in V} \brb{\gamma_t p_{t+1}(i) + (1-\gamma_t)p_t(i)}^{2-q} \:\overline{y}_t(i)^2 \\
        &\quad\leq \frac{\eta}{2q} \sum_{i \in V} p_t(i)^{2-q} \:\overline{y}_t(i)^2 + \gamma_t \frac{\eta}{2q} \sum_{i \in V} \brb{p_{t+1}(i)^{2-q}-p_t(i)^{2-q}} \:\overline{y}_t(i)^2 \\
        &\quad\leq \frac{\eta}{2q} \sum_{i \in V} p_t(i)^{2-q} \:\overline{y}_t(i)^2 \\
        &\quad\leq \frac{\eta}{2q} \sum_{i \in V} p_t(i)^{2-q} \:y_t(i)^2 \enspace,
    \end{align*}
    where $z_t=\gamma_t p_{t+1} + (1-\gamma_t)p_t$ for some $\gamma_t \in [0,1]$; the first inequality holds due to the non-negativity of the losses, the second inequality is an application of the Fenchel-Young inequality, the second equality holds since the Hessian of $\psi_q$ is a diagonal matrix with $(\nabla^2\psi_q(x))_{i,i}=qx(i)^{q-2}$,
    the third inequality is an application of Jensen's inequality (since $q\in (0,1)$), and the fourth inequality holds since $\overline{y}_t(i)=0$ for any $i$ such that $p_{t+1}(i)^{2-q}>p_t(i)^{2-q}$.
\end{proof}

\begin{lemma} \label{lem:doubling-sum}
    Let $a$ and $b$ be positive integers such that $2 \leq a \leq b$, and let $n = \bce{\log_2 a}$. Then, 
    \begin{equation*}
        \summ_{r=0}^{n-1} \sqrt{2^r \ln \brb{e^2 b 2^{-r}}} \leq \frac{\sqrt{2\pi} + 2\sqrt{2-\ln 2}}{\ln 2} \sqrt{a \ln \bbrb{\frac{e^2 b}{a}}} \enspace.
    \end{equation*}
\end{lemma}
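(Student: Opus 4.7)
The plan is to bound the sum by an integral, evaluate that integral via a Gaussian-type identity, and then carefully convert the result into the target form. Throughout, let $L := \ln(e^2 b/a) \geq 2$ (using $b \geq a$), and recall $a \leq 2^n < 2a$ since $n = \lceil \log_2 a \rceil$.

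First, I would show that $h(x) := \sqrt{2^x \ln(e^2 b\,2^{-x})}$ is increasing on $[0, n]$ by verifying that $\frac{d}{dx}(h^2) = 2^x \ln 2 \cdot \bigl[\ln(e^2 b\,2^{-x}) - 1\bigr]$ is nonnegative, which follows from $\ln(e^2 b\,2^{-x}) \geq 2 - \ln 2 > 1$ on this range (since $2^x \leq 2^n < 2a \leq 2b$). This monotonicity yields $\sum_{r=0}^{n-1} h(r) \leq \int_0^n h(x)\,dx$.

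Next, I would apply the substitutions $s = n - x$ and then $u = s\ln 2$, setting $M := \ln(e^2 b\,2^{-n})$, to rewrite the integral as
\[
    \int_0^n h(x)\,dx \;=\; \frac{\sqrt{2^n}}{\ln 2} \int_0^{n\ln 2} e^{-u/2}\sqrt{M + u}\,du.
\]
Extending the upper limit to $\infty$, using the subadditivity bound $\sqrt{M+u} \leq \sqrt{M} + \sqrt{u}$, and evaluating $\int_0^\infty e^{-u/2}\,du = 2$ and $\int_0^\infty e^{-u/2}\sqrt{u}\,du = \sqrt{2\pi}$ (the latter being $2^{3/2}\Gamma(3/2)$), gives
\[
    \sum_{r=0}^{n-1} h(r) \;\leq\; \frac{\sqrt{2^n}\,\bigl(2\sqrt{M} + \sqrt{2\pi}\bigr)}{\ln 2}.
\]

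The main obstacle is the final step, matching the target constant $\sqrt{2\pi} + 2\sqrt{2-\ln 2}$. Writing $\rho := 2^n/a \in [1,2]$ so that $M = L - \ln\rho$, it remains to show that
\[
    \sqrt{\rho}\,\bigl(2\sqrt{L - \ln\rho} + \sqrt{2\pi}\bigr) \;\leq\; \sqrt{L}\,\bigl(2\sqrt{2-\ln 2} + \sqrt{2\pi}\bigr)
\]
for every $\rho \in [1,2]$ and $L \geq 2$. The left-hand side is increasing in $\rho$: both $\rho$ and $\rho(L-\ln\rho)$ have nonnegative derivative on $[1,2]$ (the latter because $L - \ln\rho - 1 \geq 1 - \ln 2 > 0$), so it suffices to check $\rho = 2$. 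The case $L = 2$ gives equality; for $L > 2$, after rearranging and using the identity $\sqrt{A}-\sqrt{B}=(A-B)/(\sqrt{A}+\sqrt{B})$ to factor out $L - 2$, the required inequality reduces to
\[
    (\ln 2)\bigl(\sqrt{2L} + 2\bigr) \;\leq\; \sqrt{\pi}\bigl(\sqrt{2L - 2\ln 2} + \sqrt{L(2-\ln 2)}\bigr).
\]
I would verify this by splitting into the two pieces $(\ln 2)\sqrt{2L} \leq \sqrt{\pi(2L - 2\ln 2)}$ and $2\ln 2 \leq \sqrt{\pi L(2-\ln 2)}$, each of which follows for $L \geq 2$ by squaring and an elementary comparison of constants (with comfortable slack in both).
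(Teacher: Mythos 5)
Your proof is correct, and it follows the same skeleton as the paper's — bound the sum by $\int_0^n \sqrt{2^x\ln(e^2 b\,2^{-x})}\,dx$ via monotonicity, reach the intermediate bound $\frac{\sqrt{2^n}}{\ln 2}\bigl(\sqrt{2\pi}+2\sqrt{M}\bigr)$ with $M=\ln(e^2 b\,2^{-n})$, then compare constants — but it differs in two worthwhile ways. First, you evaluate the integral by substituting $u=(n-x)\ln 2$ and bounding $\int_0^\infty e^{-u/2}\sqrt{M+u}\,du \le 2\sqrt{M}+\sqrt{2\pi}$ via subadditivity of the square root and the values $\int_0^\infty e^{-u/2}\,du=2$, $\int_0^\infty e^{-u/2}\sqrt{u}\,du=\sqrt{2\pi}$, whereas the paper computes an antiderivative involving the complementary error function and invokes an external bound $\mathrm{erfc}(x)\le e^{-x^2}$; your route is self-contained and lands on exactly the same intermediate expression. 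Second, and more importantly, the paper's final inequality is justified only by the remark ``since $b/a \geq 1$'', which hides a genuine subtlety: with $L=\ln(e^2 b/a)\ge 2$, the termwise comparison $2\sqrt{2}\sqrt{L-\ln 2}\le 2\sqrt{(2-\ln 2)L}$ is false for $L>2$, so the slack from the $\sqrt{2\pi}$ term is genuinely needed. Your treatment — monotonicity in $\rho=2^n/a\in[1,2]$, equality at $L=2$, then factoring out $L-2$ via $\sqrt{A}-\sqrt{B}=(A-B)/(\sqrt{A}+\sqrt{B})$ and checking the reduced inequality $(\ln 2)\bigl(\sqrt{2L}+2\bigr)\le\sqrt{\pi}\bigl(\sqrt{2L-2\ln 2}+\sqrt{L(2-\ln 2)}\bigr)$ in two pieces — fills this gap rigorously; I verified the reduction and both pieces do hold with comfortable margin for all $L\ge 2$, so your argument is complete and in this last step actually more careful than the paper's.
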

\begin{proof}
    Since $n \leq \log_2(2 b)$ and $2^r \ln \brb{e^2 b 2^{-r}}$ is monotonically increasing in $r$ for $r \in [0,\log_2(e b)]$, we can bound the sum by an integral:
    \begin{equation*}
        \summ_{r=0}^{n-1} \sqrt{2^r \ln \brb{e^2 b 2^{-r}}} \leq \int_{0}^n \sqrt{2^r \ln \brb{e^2 b 2^{-r}}} \diff r \enspace.
    \end{equation*}
    We proceed via a change of variable; let $x=e^2 b 2^{-r}$, and note that $\diff r = -\frac{\diff x}{x \ln 2}$. We then have that
    \begin{align*}
         \int_{0}^n \sqrt{2^r \ln \brb{e^2 b 2^{-r}}} \diff r &= \sqrt{e^2 b} \int_{0}^n \sqrt{\frac{2^r}{e^2 b} \ln \brb{e^2 b 2^{-r}}} \diff r \\
         &= -\frac{e \sqrt{b}}{\ln 2} \int_{e^2 b}^{e^2 b 2^{-n}} \sqrt{\frac{\ln x}{x^3} } \diff x = \frac{e \sqrt{b}}{\ln 2} \int_{e^2 b 2^{-n}}^{e^2 b } \sqrt{\frac{\ln x}{x^3} } \diff x \\
         &= \frac{e \sqrt{b}}{\ln 2} \Bsb{- \sqrt{2 \pi } \cdot \erfc\brb{\sqrt{(\ln x)/{2} }} -2 \sqrt{(\ln x)/{x} }}^{e^2 b}_{e^2 b 2^{-n}} \\
         &\leq \frac{e \sqrt{b}}{\ln 2} \lrb{\sqrt{2 \pi } \cdot \erfc\brb{\sqrt{\ln (e^2 b 2^{-n})/{2} }} + 2 \sqrt{\frac{2^{n} \ln (e^2 b 2^{-n})}{e^2 b} }}\enspace,
    \end{align*}
    where $\erfc(x) = 1-\frac{2}{\sqrt{\pi}} \int_{0}^x \exp(-z^2) \diff z$ is the complementary Gaussian error function, which is always positive. By \cite[Theorem~1]{cerf-bound}, we have that $\erfc(x) \leq \exp(-x^2)$. Consequently,
    \begin{align*}
        \int_{0}^n \sqrt{2^r \ln \brb{e^2 b 2^{-r}}} \diff r &\leq \frac{e \sqrt{b}}{\ln 2} \lrb{\sqrt{2 \pi } \sqrt{\frac{2^{n}}{e^2 b}}  + 2 \sqrt{\frac{2^{n} \ln (e^2 b 2^{-n})}{e^2 b} }} \\
        &=  \frac{\sqrt{2^{n}}}{\ln 2} \lrb{\sqrt{2 \pi } + 2 \sqrt{\ln (e^2 b 2^{-n})} } \\
        &\leq \frac{\sqrt{2a}}{\ln 2} \lrb{\sqrt{2 \pi } + 2 \sqrt{\ln \bbrb{\frac{e^2 b}{2a}}} } \\
        &\leq \frac{\sqrt{2\pi} + 2\sqrt{2-\ln 2}}{\ln 2} \sqrt{a \ln \bbrb{\frac{e^2 b}{a}}} \enspace,
    \end{align*}
    where in the second inequality we used once again the fact that $2^r \ln \brb{e^2 b 2^{-r}}$ is monotonically increasing in $r$ for $r \in [0,\log_2(e b)]$ to replace $n$ with $\log_2(a) + 1$, and the last inequality holds since $b/a \geq 1$.
\end{proof}

\section{Proofs of Section \ref{s:main}} \label{app:main}
In this section, we provide the proof of \Cref{thm:regret}, which is restated below.
\thmregret*
\begin{proof}
    Let $i^* \in \argmin_{i \in V} \sum_{t=1}^T \loss_t(i)$ and $\mathbf{e}_{i^*} \in \R^K$ be its indicator vector. 
    Whenever $J_t$ is nonempty, let $j_t \in V$ be the only action such that $J_t = \lcb{j_t}$.
    Similarly to \cite{zimmert2019connections}, let $z_t = \I\lcb{J_t \neq \emptyset} \I\lcb{I_t \in \neigh{t}{j_t}} \frac{1-\loss_t(j_t)}{1-p_t(j_t)}$ and define new losses $\tilde{\loss}_t(i) = \hat{\loss}_t(i) + z_t$ for each time step $t \in [T]$ and each action $i \in V$.
    Since $p_t, \mathbf{e}_{i^*} \in \Delta_K$, we have that $\langle p_t - \mathbf{e}_{i^*}, \hat{\loss}_t \rangle = \langle p_t - \mathbf{e}_{i^*}, \tilde{\loss}_t \rangle$ for every $t \in [T]$. Then, using the fact that $\E_t\bsb{\hat{\loss}_t} = \loss_t$, we get that
    \[
    R_T = \E \lsb{ \sum_{t=1}^T \langle p_t - \mathbf{e}_{i^*}, \hat{\loss}_t \rangle} = \E \lsb{ \sum_{t=1}^T \langle p_t - \mathbf{e}_{i^*}, \tilde{\loss}_t \rangle} \enspace,
    \]
    where the first equality holds via the same arguments made in the proof of \Cref{lem:FTRL-Tsallis-bandits-bound}.
    If we consider the optimization step of \Cref{alg:FTRL}, computing the same inner product over the new losses $\tilde{\loss}_1, \dots, \tilde{\loss}_T$ for some $p \in \Delta_K$ gives
    \[
        \Ban{\sum_{s=1}^t \tilde{\loss}_s, p} = \sum_{s=1}^t z_s + \Ban{\sum_{s=1}^t \hat{\loss}_s, p} \enspace,
    \]
    where the sum $\sum_{s=1}^t z_s$ is constant with respect to $p$.
    This implies that the objective functions in terms of either $(\hat{\loss}_t)_{t\in[T]}$ and $(\tilde{\loss}_t)_{t\in[T]}$, respectively, are minimized by the same probability distributions. However, notice that, unlike $(\hat{\loss}_t)_{t\in[T]}$, the loss vectors in $(\tilde{\loss}_t)_{t\in[T]}$ are always non-negative.
    Consequently, similar to the proof of \Cref{lem:FTRL-Tsallis-bandits-bound}, we may apply \Cref{lem:FTRL-Tsallis-bound} to upper bound the regret of \Cref{alg:FTRL} in terms of the losses $(\tilde{\loss}_t)_{t\in[T]}$.
    Doing so gives
    \begin{align} \label{eq:noloops-intermediate-regret-bound}
        \E \lsb{ \sum_{t=1}^T \langle p_t - \mathbf{e}_{i^*}, \tilde{\loss}_t \rangle}
        \le \frac{K^{1-q}}{\eta(1-q)} + \frac{\eta}{2q} \sum_{t=1}^T \E\Bbsb{\sum_{i\in V} p_t(i)^{2-q} \E_t\Bsb{\tilde{\loss}_t(i)^2}} \enspace.
    \end{align}
    We can bound the second term by observing that $\tilde{\loss}_t(j_t) = 1$ whenever $J_t \neq \emptyset$.
    Therefore,
    \begin{align*}
        \sum_{i\in V} p_t(i)^{2-q} \E_t\Bsb{\tilde{\loss}_t(i)^2}
        &\le 2\sum_{i \in V\setminus J_t} p_t(i)^{2-q}\E_t\Bsb{\hat{\loss}_t(i)^2} + 2\E_t\lsb{z_t^2} \sum_{i \in V\setminus J_t} p_t(i)^{2-q} + 1 \\
        &\le 2\sum_{i \in V\setminus J_t} \frac{p_t(i)^{2-q}}{P_t(i)} + 2\E_t\lsb{z_t^2} \sum_{i \in V\setminus J_t} p_t(i)^{2-q} + 1 \\
        &\le 2\sum_{i \in V\setminus J_t} \frac{p_t(i)^{2-q}}{P_t(i)} + 3 \enspace,
    \end{align*}
    where the second inequality holds because $\E_t\bsb{\hat{\loss}_t(i)^2} \le 1/P_t(i)$ for all $i \notin J_t$, and the third inequality follows from the fact that
    \[
        \E_t\lsb{z_t^2} \sum_{i \in V\setminus J_t} p_t(i)^{2-q} = \I\lcb{J_t \neq \emptyset}\frac{\brb{1-\loss_t(j_t)}^2}{1-p_t(j_t)} \sum_{i \in V\setminus J_t} p_t(i)^{2-q} \le 1 \enspace.
    \]
    We can handle the remaining sum by separating it over nodes $i\in S_t$, which satisfy $P_t(i) = 1-p_t(i)$ because of strong observability, and those in $\overline{S}_t = V\setminus S_t$.
    In the first case, any node $i \in S_t \setminus J_t$ has $p_t(i) \le 1/2$ and thus
    \[
        \sum_{i \in S_t\setminus J_t} \frac{p_t(i)^{2-q}}{P_t(i)}
        = \sum_{i \in S_t\setminus J_t} \frac{p_t(i)^{2-q}}{1-p_t(i)}
        \le 2\sum_{i \in S_t\setminus J_t} p_t(i)^{2-q} \le 2 \enspace.
    \]
    while in the second case we have that $\sum_{i \in \overline{S}_t} p_t(i)^{2-q}/P_t(i) \le \alpha^{q}$ by \Cref{lem:turan-tsallis} with $U = \overline{S}_t$ and $b=1-q$.
    Overall, we have shown that
    \begin{align} \label{eq:noloops-variance-term-bound}
        \sum_{i\in V} p_t(i)^{2-q} \E_t\Bsb{\tilde{\loss}_t(i)^2} \le 2 \sum_{i \in \overline{S}_t} \frac{p_t(i)^{2-q}}{P_t(i)} + 7 \le 2\alpha^{q} + 7 \le 9\alpha^{q} \enspace.
    \end{align}
    Plugging back into \eqref{eq:noloops-intermediate-regret-bound}, we obtain that
    \begin{align*} 
        R_T
        &\leq \frac{K^{1-q}}{\eta(1-q)} + \frac{9\eta}{2q} \alpha^{q}T \\ \nonumber
        &= 3\sqrt{\frac{2K^{1-q}\alpha^{q}}{q(1-q)} T} \\ \nonumber
        & \le 6\sqrt{e\alpha T \lrb{2+\ln(K/\alpha)}} \enspace,
    \end{align*}
    where the equality is due to our choice of $\eta$, and the last inequality follows as in the proof of \Cref{thm:regret-self-loops} together with our choice of $q$.
\end{proof}

\section{Proofs of Section \ref{s:doubling}} \label{app:doubling}
In this section, we provide the proof of \Cref{thm:regret-doubling}, which is restated below.
\thmregretdoubling*
\begin{proof}
Notice that if $\avgalpha = 1$, the initial guess is correct and the algorithm will never restart. Moreover, since in this case we have that $\alpha_t=1$ for all $t$, the theorem follows trivially from the regret bound of \Cref{thm:regret}. Hence, we can assume for what follows that $\avgalpha > 1$. Let $i^* \in \argmin_{i\in[K]} \sum_{t=1}^T \loss_t(i)$ and $n = \bce{\log_2 \avgalpha}$. Note that the maximum value of $r$ that the algorithm can reach is $n-1$. To see this, observe that \Cref{lem:turan-tsallis} implies that for any $r$ and $t$, $\tvarlps_t(\tparam_r) \leq \alpha^{\tparam_r}_t$. Consequently, for any $t \geq T_r$,
\[
    \frac{1}{T} \summ_{s=T_r}^t \tvarlps_s(\tparam_r)^{1/\tparam_r} \leq \frac{1}{T} \summ_{s=T_r}^t \alpha_s \leq \avgalpha \leq 2^n \enspace. 
\]
For $t \in [T]$, let $r_t$ be the value of $r$ at round $t$. 
Without loss of generality, we assume that $r$ takes each value in  $\{0,\dots,n-1\}$ for at least two rounds. Additionally, we define $T_{n}=T+2$ for convenience.
We start by decomposing the regret over the $n$ intervals (each corresponding to a value of $r$ in  $\{0,\dots,n-1\}$) and bounding the instantaneous regret with $1$ for each step in which we restart (i.e., at the last step of each but the last interval):
\begin{align}
    R_T &= 
    \E\Bbsb{\summ_{t=1}^T \brb{\loss_t(I_t) - \loss_t(i^*)}} \nonumber\\
    &\leq \E\Bbsb{\summ_{r=0}^{n-1} \summ_{t=T_r}^{T_{r+1}-2} \brb{\loss_t(I_t) - \loss_t(i^*)}} +  n - 1 \nonumber\\
    \label{doubling-regret-decomp}
    &\leq \E\Bbsb{\summ_{r=0}^{n-1} \summ_{t=T_r}^{T_{r+1}-2} \brb{\loss_t(I_t) - \loss_t(i^*)}} +  \log_2 \avgalpha \enspace.
\end{align}
For what follows, let $\mathbf{e}_{i^*} \in \R^K$ be an indicator vector for $i^*$ and let $\tilde{\loss}_t$ be as defined in the proof of \Cref{thm:regret}.
Fix $r \in \{0,\dots,n-1\}$, we proceed by bounding the regret in the interval $[T_r, T_{r+1}-2]$:
\begin{align}
     &\E\Bbsb{\summ_{t=T_r}^{T_{r+1}-2} \brb{\loss_t(I_t) - \loss_t(i^*)}} \nonumber\\
     &\quad= \E\Bbsb{\sum_{t=1}^T \I\bbcb{r_t = r, \frac{1}{T} \sum_{s=T_{r_t}}^t \tvarlps_s(\tparam_{r_t})^{1/\tparam_{r_t}} \leq 2^{r_t+1}} \brb{\loss_t(I_t)  -  \loss_t(i^*)}} \nonumber\\
     &\quad\stackrel{(a)}{=} \E\Bbsb{\sum_{t=1}^T \I\bbcb{r_t = r, \frac{1}{T} \sum_{s=T_{r_t}}^t \tvarlps_s(\tparam_{r_t})^{1/\tparam_{r_t}} \leq 2^{r_t+1}} \inprod{p_t - \mathbf{e}_{i^*}, \hat{\loss}_t}} \nonumber\\
     &\quad\stackrel{(b)}{=} \E\Bbsb{\sum_{t=1}^T \I\bbcb{r_t = r, \frac{1}{T} \sum_{s=T_{r_t}}^t \tvarlps_s(\tparam_{r_t})^{1/\tparam_{r_t}} \leq 2^{r_t+1}} \inprod{p_t - \mathbf{e}_{i^*}, \tilde{\loss}_t}} \nonumber\\
     &\quad= \E\Bbsb{\sum_{t=T_r}^{T_{r+1}-2} \inprod{p_t - \mathbf{e}_{i^*}, \tilde{\loss}_t}} \nonumber\\
     &\quad\stackrel{(c)}{\leq} \frac{K^{1-q_r}}{\eta_r(1-q_r)} + \frac{\eta_r}{2q_r} \E\Bbsb{\sum_{t=T_r}^{T_{r+1}-2} \sum_{i\in V} p_t(i)^{2-q_r} \tilde{\loss}_t(i)^2} \nonumber\\
     &\quad\stackrel{(d)}{=} \frac{K^{1-q_r}}{\eta_r(1-q_r)} + \frac{\eta_r}{2q_r} \E\Bbsb{\sum_{t=1}^T \I\bbcb{r_t = r, \frac{1}{T} \sum_{s=T_{r_t}}^t \tvarlps_s(\tparam_{r_t})^{1/\tparam_{r_t}} \leq 2^{r_t+1}} \E_t\bbsb{\sum_{i\in V} p_t(i)^{2-q_r} \tilde{\loss}_t(i)^2}} \nonumber\\
     &\quad\stackrel{(e)}{\leq} \frac{K^{1-q_r}}{\eta_r(1-q_r)} + \frac{\eta_r}{2q_r} \E\Bbsb{\sum_{t=1}^T \I\bbcb{r_t = r, \frac{1}{T} \sum_{s=T_{r_t}}^t \tvarlps_s(\tparam_{r_t})^{1/\tparam_{r_t}} \leq 2^{r_t+1}} (2\tvarlps_t(q_r) + 7)} \nonumber\\ \label{doubling-subregret}
     &\quad= \frac{K^{1-q_r}}{\eta_r(1-q_r)} + \frac{\eta_r}{2q_r} \E\Bbsb{\sum_{t=T_r}^{T_{r+1}-2} (2\tvarlps_t(q_r) + 7)} \enspace,
\end{align}
where $(a)$ follows since 
$\E_t\bsb{\loss_t(I_t)} = \sum_{i \in V} p_t(i) \loss_t(i)$, $\E_t\bsb{\hat{\loss}_t} = \loss_t$, and the indicator at round~$t$ is measurable with respect to $\sigma(I_1,\dots,I_{t-1})$, that is, the $\sigma$-algebra generated by $I_1,\dots,I_{t-1}$;
$(b)$~follows since $\langle p_t - \mathbf{e}_{i^*}, \hat{\loss}_t \rangle = \langle p_t - \mathbf{e}_{i^*}, \tilde{\loss}_t \rangle$ holds by the definition of $\tilde{\loss}_t$;
$(c)$ is an application of \Cref{lem:FTRL-Tsallis-bound}, justifiable with the same argument leading to \eqref{eq:noloops-intermediate-regret-bound} in the proof of \Cref{thm:regret};
$(d)$ uses once again that the indicator at round $t$ is measurable with respect to $\sigma(I_1,\dots,I_{t-1})$;
finally, $(e)$ follows via \eqref{eq:noloops-variance-term-bound}.
Define $T_{r:r+1}=T_{r+1}-T_r-1$, and notice that
\begin{align*}
    \summ_{t=T_r}^{T_{r+1}-2}  \tvarlps_t(\tparam_r) &= \frac{T_{r:r+1}}{T_{r:r+1}}  \summ_{t=T_r}^{T_{r+1}-2}  \brb{\tvarlps_t(\tparam_r)^{1/\tparam_r}}^{\tparam_r} \\
    &\leq T_{r:r+1}   \bbrb{\frac{1}{T_{r:r+1}}  \summ_{t=T_r}^{T_{r+1}-2} \tvarlps_t(\tparam_r)^{1/\tparam_r}}^{\tparam_r} \\
    &\leq T_{r:r+1}   \bbrb{\frac{T}{T_{r:r+1}}  2^{r+1}}^{\tparam_r} \\
    &\leq 2 T \brb{2^r}^{\tparam_r} \enspace,
\end{align*}
where the first inequality follows due to Jensen's inequality since $\tparam_r \in (0,1)$, 
and the second follows from the restarting condition of \Cref{alg:FTRL-doubling}. 
Next, we plug this inequality back into \eqref{doubling-subregret}, and then, similar to the proof of \Cref{thm:regret}, we use the definitions of $\eta_r$ and $\tparam_r$ and bound the resulting expression to get that
\begin{align*}
    \E\Bbsb{\summ_{t=T_r}^{T_{r+1}-2} \brb{\loss_t(I_t)  -  \loss_t(i^*)}}
    &\leq \frac{K^{1-q_r}}{\eta_r(1-\tparam_r)} + \frac{11\eta_r}{2\tparam_r} T \lrb{2^r}^{\tparam_r} \\
    &\leq 2\sqrt{11 e T 2^r \Brb{2 + \ln \brb{K 2^{-r}}}} \leq 4\sqrt{3 e T 2^r \ln \brb{e^2 K 2^{-r}}} \enspace. 
\end{align*}
We then sum this quantity over $r$ and use \Cref{lem:doubling-sum} to get that
\begin{align*}
    \E\Bbsb{\summ_{r=0}^{n-1} \summ_{t=T_r}^{T_{r+1}-2} \brb{\loss_t(I_t)  - \loss_t(i^*)}}
    &\leq 4\sqrt{3 e T} \summ_{r=0}^{n-1} \sqrt{2^r \ln \brb{e^2 K 2^{-r}}} \\
    &\leq 4\sqrt{6 e}  \frac{\sqrt{\pi} + \sqrt{4-2\ln 2}}{\ln 2} \sqrt{\avgalpha T  \brb{2 + \ln \lrb{K / \avgalpha}}} \enspace,
\end{align*}
which, together with \eqref{doubling-regret-decomp}, concludes the proof.
\end{proof}

\section{Proof of the Lower Bound} \label{app:lower-bound}
In this section, we prove the lower bound provided in \Cref{s:lower-bound}, which we restate below. As remarked before, our proof makes use of known techniques for proving lower bounds for the multitask bandit problem. In particular, parts of the proof are adapted from the proof of Theorem 7 in \cite{eldowa2023information}. 
\thmlowerbound*
\begin{proof}
Once again, we define $M = \log_\alpha K$, which we assume for now to be an integer; we discuss in the end how to extend the proof to the case when it is not. The proof will be divided into five parts I--V.
We begin by formalizing the class of environments described in \Cref{s:lower-bound} and stating two useful lemmas.

\subsection*{I. Preliminaries}
We remind the reader that we identify each action in $V$ with a vector $a = \big(a(1),\ldots,a(\ngames)\big) \in [\alpha]^\ngames$. We will focus on a set of $\ngames$ undirected graphs $\G = \{G^i\}_{i=1}^\ngames$, where 
$G^i$ consists of $\alpha$ isolated cliques (with self-loops) $\{C_{i,j}\}_{j=1}^\alpha$ such that an action $a$ belongs to clique $C_{i,j}$ if and only if $a(i)=j$. As remarked before, all these graphs have independence number $\alpha$. For convenience, we also use actions in $V$ as functions from $\G$ to $[\alpha]$, with $a(G^i) = a(i)$. 

An environment is identified by a function $\mu\colon [\alpha] \times \G \to [0,1]$ such that at every round $t$, after having drawn a graph $G_t$ from the uniform distribution over $\G$ (denoted with $U_\G$), the environment latently draws for each $j \in [\alpha]$ and $G \in \G$, a Bernoulli random variable $\gamma_t(j;G)$ with mean $\mu(j;G_t)$. Subsequently, for defining the loss of action $a \in V$ at round $t$, we simply set $\loss_t(a) = \gamma_t(a(G_t);G_t)$, whose expectation, conditioned on $G_t$, is $\mu(a(G_t);G_t)$.
To simplify the notation, we use $\mu(a;G)$ as shorthand for $\mu(a(G);G)$ and $\gamma_t(a;G)$ as shorthand for $\gamma_t(a(G);G)$. 
Denote by $A_t$ the action picked by the player at round $t$, which is chosen prior to observing $G_t$. We will focus on the following notion of stochastic regret, which we define for environment $\mu$ as:
\begin{equation*}
    \storeg(\mu) = \max_{a \in V} \E_\mu \biggl[\sum_{t=1}^T (\loss_t(A_t) - \loss_t(a)) \biggr] \enspace,
\end{equation*}
where $\E_\mu[\cdot]$ denotes the expectation with respect to the sequence of losses and graphs generated by environment $\mu$, as well as the randomness in the choices of the player. We can use the tower rule to rewrite this expression as
\begin{align}
    \storeg(\mu) &= \max_{a \in V} \sum_{t=1}^T \E_\mu \biggl[\E_\mu \biggl[\E_\mu \biggl[  \ell_t(A_t) - \ell_t(a) \,\bigg|\, G_t,A_t\biggr] \,\bigg|\, A_t\biggr]\biggr] \nonumber\\
    &= \max_{a \in V} \sum_{t=1}^T \E_\mu \biggl[\E_\mu \biggl[\mu(A_t;G_t) - \mu(a;G_t) \,\bigg|\, A_t\biggr]\biggr] \nonumber\\
    &= \max_{a \in V} \sum_{t=1}^T \E_\mu \biggl[\sum_{i=1}^M U_\G(G^i)(\mu(A_t;G^i) - \mu(a;G^i))\biggr] \nonumber\\ \label{low:varying:stoch-reg}
    &=\max_{a \in V} \frac{1}{M} \sum_{i=1}^M \E_\mu \biggl[\sum_{t=1}^T (\mu(A_t;G^i) - \mu(a;G^i)) \biggr] \enspace.
\end{align}
For a fixed algorithm, one can show via standard arguments that 
\[
\sup_{(\loss_t)_{t=1}^T,(G_t)_{t=1}^T} R_T \geq \sup_\mu \storeg(\mu) \enspace.
\] 
Hence, it suffices for our purposes to prove a lower bound for the right-hand side of this inequality. 

In the following, we will have to be more precise about the probability measure with respect to which the expectation in \eqref{low:varying:stoch-reg} is defined. Let $\obs_t \in \{0,1\}^{K/\alpha}$ denote the vector of losses observed by the player at round $t$, which corresponds to the losses of the actions connected to $A_t$ assuming that a systematic ordering of the actions makes it clear which coordinate of $\obs_t$ belongs to which action. Let $\ones$  and $\zeros$ be the $K/\alpha$ dimensional\footnote{Note that $K/\alpha = \alpha^{\ngames-1}$ is an integer since $M (\geq 1)$ was assumed to be an integer.} vectors of all ones and all zeros respectively. Clearly, we have that $\obs_t = \gamma_t(A_t;G_t) \ones = \loss_t(A_t) \ones$, which is a binary random variable taking values in $\{\zeros, \ones\}$. Let $\obspr_\mu$ be the probability distribution of $\obs_t$ in environment $\mu$. Notice then that we have that
\begin{equation} \label{low:varying:observation-vector}
    \obspr_\mu(\gamma_t = \ones \,|\, G_t=G, A_t=a) = \mu(a; G) \enspace.
\end{equation}

Let $H_t = (A_1,G_1,\obs_1, \dotsc, A_t, G_t, \obs_t) \in (V \times \G \times \{0,1\}^{K/\alpha})^t$ be the interaction trajectory after $t$ steps. The policy $\pi$ adopted by the player can be modelled as a sequence of probability kernels $\{\pi_t\}_{t=1}^T$ each mapping the trajectory so far to a distribution over the actions, i.e., $A_t$ is sampled from $\pi_t(\cdot \,|\, H_{t-1})$. An environment $\mu$ and a policy $\pi$ (implicit in the notation, and fixed throughout the rest of the proof) together define a distribution $\pr_\mu$ over the set of possible trajectories of $T$ steps such that:
\begin{align*}
    \pr_\mu(H_T) &= \prod_{t=1}^T \pi_t(A_t \,|\, H_{t-1}) U_\G(G_t) \obspr_\mu(\lambda_t \,|\, G_t,A_t) \enspace.
\end{align*}

If $P$ and $Q$ are two distributions defined on the same space, let $\kl{P}{Q}$ and $\tv(P,Q)$ be the KL-divergence and the total variation distance respectively between $P$ and $Q$. Furthermore, let $\klber{p}{q}$ be the KL-divergence between two Bernoulli random variables with means $p$ and $q$.
The following lemma provides an expression for the KL-divergence between two the probability distributions associated to two environments.
\begin{lemma} \label{low:varying:kl-decomp}
For a fixed policy, let $\mu$ and $\mu'$ be two environments as described above. Then,
\begin{equation*}
    \kl{\pr_\mu}{\pr_{\mu'}} = \frac{1}{M} \sum_{i=1}^M \sum_{a\in V} N_\mu(a;T) \bklber{\mu(a;G^i)}{\mu'(a;G^i)} \enspace,
\end{equation*}
where $N_\mu(a;T) = \E_\mu \Bsb{\sum_{t=1}^T \I\{A_t=a\}}$.
\end{lemma}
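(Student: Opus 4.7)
The plan is to apply the chain rule for KL-divergence to the factorized trajectory distributions, exploiting the fact that only the observation kernel depends on the environment. From the explicit factorization
$$\pr_\mu(H_T) = \prod_{t=1}^T \pi_t(A_t \mid H_{t-1})\, U_\G(G_t)\, \obspr_\mu(\obs_t \mid G_t, A_t),$$
and the identical factorization for $\pr_{\mu'}$ with $\obspr_\mu$ replaced by $\obspr_{\mu'}$, the factors $\pi_t$ and $U_\G$ cancel in the log-ratio. Applying the chain rule for KL and taking the expectation under $\pr_\mu$ therefore yields
$$\kl{\pr_\mu}{\pr_{\mu'}} = \sum_{t=1}^T \E_\mu\Bsb{\kl{\obspr_\mu(\cdot \mid G_t, A_t)}{\obspr_{\mu'}(\cdot \mid G_t, A_t)}}.$$

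For the inner KL, I would invoke \eqref{low:varying:observation-vector}: conditioned on $(G_t, A_t) = (G, a)$, the observation $\obs_t$ is supported on $\{\zeros, \ones\}$ and equals $\ones$ with probability $\mu(a;G)$ under $\pr_\mu$ (resp.\ $\mu'(a;G)$ under $\pr_{\mu'}$). The inner KL therefore collapses to a Bernoulli KL divergence, giving $\kl{\obspr_\mu(\cdot \mid G_t, A_t)}{\obspr_{\mu'}(\cdot \mid G_t, A_t)} = \bklber{\mu(A_t;G_t)}{\mu'(A_t;G_t)}$ almost surely under $\pr_\mu$.

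Next I would integrate out $G_t$. Since $A_t$ is measurable with respect to $H_{t-1}$ while $G_t$ is drawn independently from $U_\G$, the two are independent under $\pr_\mu$, so
$$\E_\mu\Bsb{\bklber{\mu(A_t;G_t)}{\mu'(A_t;G_t)}} = \frac{1}{M}\sum_{i=1}^M \E_\mu\Bsb{\bklber{\mu(A_t;G^i)}{\mu'(A_t;G^i)}}.$$
Expanding each remaining expectation as $\sum_{a\in V} \pr_\mu(A_t=a)\,\bklber{\mu(a;G^i)}{\mu'(a;G^i)}$, swapping the sums over $t$ and $a$, and recognizing $N_\mu(a;T) = \sum_{t=1}^T \pr_\mu(A_t=a)$ yields the claimed identity.

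The only subtlety I anticipate is the bookkeeping for the chain rule: one must be careful that the expectation in the chain-rule expression is over the joint distribution of $(G_t, A_t)$ under $\pr_\mu$, so that the subsequent independence argument between $A_t$ and $G_t$ (which rests on the fact that $G_t$ has no ancestors in the generative model aside from $U_\G$) applies cleanly. Everything else is a routine manipulation once the cancellation of the $\mu$-independent kernels in the log-ratio is noted.
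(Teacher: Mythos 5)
Your proposal is correct and follows essentially the same route as the paper's proof: cancel the $\mu$-independent kernels $\pi_t$ and $U_\G$ in the log-likelihood ratio, reduce via the chain rule/tower rule to the per-round conditional KL, identify it as a Bernoulli KL through \eqref{low:varying:observation-vector}, average over the uniform graph draw, and collect the expected visit counts $N_\mu(a;T)$. (Your remark that $A_t$ is measurable with respect to $H_{t-1}$ is slightly imprecise since $A_t$ involves fresh randomization, but the needed fact---independence of $G_t$ from $A_t$---follows directly from the factorization, so the argument stands.)
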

\begin{proof}
The proof is similar to that of Lemma 15.1 in \cite{lattimore2020bandit}. Namely, we have in our case that
\begin{align*}
    \kl{\pr_\mu}{\pr_{\mu'}} &= \E_\mu \biggl[\ln\frac{\pr_\mu(H_T)}{\pr_{\mu'}(H_T)}\biggr] \\
    &= \E_\mu \biggl[\ln\frac{\prod_{t=1}^T \pi_t(A_t \,|\, H_{t-1}) U_\G(G_t) \obspr_\mu(\lambda_t \,|\, G_t,A_t)}{\prod_{t=1}^T \pi_t(A_t \,|\, H_{t-1}) U_\G(G_t) \obspr_{\mu'}(\lambda_t \,|\, G_t,A_t)}\biggr] \\
    &= \sum_{t=1}^T \E_\mu \biggl[ \ln \frac{\obspr_{\mu}(\lambda_t \,|\, G_t,A_t)}{\obspr_{\mu'}(\lambda_t \,|\, G_t,A_t)}\biggr] \\
    &= \sum_{t=1}^T \E_\mu \biggl[\E_\mu \biggl[\E_\mu \biggl[ \ln \frac{\obspr_{\mu}(\lambda_t \,|\, G_t,A_t)}{\obspr_{\mu'}(\lambda_t \,|\, G_t,A_t)} \,\bigg|\, G_t,A_t\biggr] \,\bigg|\, A_t\biggr]\biggr] \\
    &= \sum_{t=1}^T \E_\mu \biggl[\E_\mu \biggl[\kl{\obspr_\mu(\cdot \,|\, G_t,A_t)}{\obspr_{\mu'}(\cdot \,|\, G_t,A_t)} \bigg|A_t\biggr]\biggr] \\
    &= \sum_{t=1}^T \E_\mu \biggl[\sum_{i=1}^M U_\G(G^i) \kl{\obspr_\mu(\cdot \,|\, G^i,A_t)}{\obspr_{\mu'}(\cdot \,|\, G^i,A_t)} \biggr] \\
    &= \frac{1}{M} \sum_{i=1}^M \sum_{t=1}^T \E_\mu \biggl[ \kl{\obspr_\mu(\cdot \,|\, G^i,A_t)}{\obspr_{\mu'}(\cdot \,|\, G^i,A_t)} \biggr] \\
    &= \frac{1}{M} \sum_{i=1}^M \sum_{a\in V} N_\mu(a;T) \kl{\obspr_\mu(\cdot \,|\, G^i,a)}{\obspr_{\mu'}(\cdot \,|\, G^i,a)} \\
    &= \frac{1}{M} \sum_{i=1}^M \sum_{a\in V} N_\mu(a;T) \bklber{\mu(a;G^i)}{\mu'(a;G^i)} \enspace,
\end{align*}
where the last equality holds via \eqref{low:varying:observation-vector}.
\end{proof}
The following standard lemma, adapted from \cite{lattimore2020bandit}, will be used in the sequel.
\begin{lemma} \label{low:conversion-lemma}
Let P and Q be probability measures on the same measurable space $(\Omega, \mathcal{F})$. Let $a < b$ and $X: \Omega \xrightarrow{} [a,b]$ be an $\mathcal{F}$-measurable random variable. Then,
\begin{equation*}
    \left|\int_\Omega X(\omega) dP(\omega) - \int_\Omega X(\omega) dQ(\omega)\right| \leq (b-a) \sqrt{\frac{1}{2} \kl{P}{Q}} \enspace.
\end{equation*}
\end{lemma}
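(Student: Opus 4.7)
The plan is to reduce the statement to the classical Pinsker inequality via an affine normalization combined with the dual (variational) representation of the total variation distance. First, I would define $Y = (X-a)/(b-a)$, which is an $\mathcal{F}$-measurable random variable taking values in $[0,1]$, and observe that
\[
    \left|\int_\Omega X(\omega)\, dP(\omega) - \int_\Omega X(\omega)\, dQ(\omega)\right| = (b-a)\left|\int_\Omega Y(\omega)\, dP(\omega) - \int_\Omega Y(\omega)\, dQ(\omega)\right| \enspace.
\]
Thus it suffices to establish the inequality with leading constant $1$ for random variables $Y \in [0,1]$.

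Next, I would centre the random variable to exploit the sharp constant in the variational formula. Setting $Z := Y - 1/2$, one has $\|Z\|_\infty \leq 1/2$, and since $Z$ differs from $Y$ by a constant, $\int Z\, dP - \int Z\, dQ = \int Y\, dP - \int Y\, dQ$. The variational characterization of the total variation distance, $|\int f\, dP - \int f\, dQ| \leq 2\|f\|_\infty \cdot \tv(P,Q)$ for any bounded measurable $f$, then yields $|\int Y\, dP - \int Y\, dQ| \leq \tv(P,Q)$. Finally, applying Pinsker's inequality $\tv(P,Q) \leq \sqrt{\kl{P}{Q}/2}$ and chaining with the display above gives the claim.

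The argument is entirely standard and no step presents a genuine obstacle. The only subtlety worth flagging is the centring at $1/2$: the factor of two in the variational formula means that naively using $\|Y\|_\infty \leq 1$ would lose a factor of $2$ and destroy the sharp $1/2$ inside the square root; centring restores it. If a self-contained derivation is preferred over citing Pinsker directly, one can instead write the absolute difference as $\int (Y-1/2)(dP/d\nu - dQ/d\nu)\, d\nu$ for a dominating measure $\nu$, bound it by $\tfrac{1}{2}\int |dP/d\nu - dQ/d\nu|\, d\nu = \tv(P,Q)$, and then invoke the standard proof of Pinsker via the elementary inequality $(p-q)^2 \leq \tfrac{1}{2}(p+q)\cdot [p\ln(p/q) + q\ln(q/p)]$ (or, equivalently, Hoeffding's lemma applied to the log-likelihood ratio); this is a well-trodden two-line computation.
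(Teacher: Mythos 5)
Your proposal is correct and follows essentially the same route as the paper: the paper bounds $\bigl|\int X\,dP - \int X\,dQ\bigr| \leq (b-a)\,\tv(P,Q)$ by citing Exercise 14.4 of Lattimore and Szepesv\'ari and then applies Pinsker's inequality, while you simply prove that cited total-variation step yourself (via the normalization to $[0,1]$ and the centring at $1/2$, which correctly preserves the sharp constant) before invoking Pinsker in the same way.
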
 
\begin{proof}
We have, by Exercise 14.4 in \cite{lattimore2020bandit}, that
\begin{equation*}
    \left|\int_\Omega X(\omega) dP(\omega) - \int_\Omega X(\omega) dQ(\omega)\right| \leq (b-a) \tv(P,Q) \enspace,
\end{equation*}
from which the lemma follows by applying Pinsker's inequality.
\end{proof}

\subsection*{II. Choosing the environments}

    We will construct a collection of environments $\{\mu_a\}_{a \in V}$, each associated to an action, such that for any $i \in [\ngames]$ and $j \in [\alpha]$,
    \[
    \mu_a(j;G^i) = \frac12 - \epsilon \I\{a(i)=j\} \enspace,
    \]
    where $0<\epsilon\leq\frac{1}{4}$ will be tuned later. In words, for a fixed graph, environment $\mu_a$ gives a slight advantage to actions that are connected to $a$ in that graph, and thus agree with $a$ in the corresponding game. 
    Additionally, for every $a \in V$ and $i \in [M]$, we define environment $\mu^{-i}_a$ to be such that for any $s \in [\ngames]$ and $j \in [\alpha]$,
    \begin{equation*}
        \mu^{-i}_a(j;G^s) = 
        \begin{cases}
        \frac12, & \text{if } s=i \\
        \mu_a(j;G^s), & \text{otherwise.}
        \end{cases}
    \end{equation*}
    Similar to \cite{eldowa2023information}, we will define, for every $i \in [M]$, an equivalence relation $\sim_i$ on the arms such that
    \begin{equation*}
        a \sim_i a' \iff \forall s \in [M]\setminus\{i\}, a'(s)=a(s) \enspace,
    \end{equation*}
    for any $a, a' \in V$.
    This means that two arms are equivalent according to $\sim_i$ if and only if their choices of base actions coincide in all games that are different from $i$. 
    Let $V / \sim_i$ be the set of equivalence classes of $\sim_i$. It is easy to see that $V / \sim_i$ contains exactly $\alpha^{M-1}$ equivalence classes, and that each class consists of $\alpha$ actions, each corresponding to a different choice of base action in game $i$. 
    Notice then that for an equivalence class $W \in V / \sim_i$, all environments $\mu^{-i}_{a}$ with $a \in V$ are indeed identical. In the sequel, this environment will also be referred to as $\mu_{W}^{-i}$.

\subsection*{III. Lower-bounding the regret of a single environment}

    Note that in environment $\mu_a$, we have that $a = \argmin_{a'\in V} \sum_{i=1}^M \mu_a(a';G^i)$. Consequently, starting from \eqref{low:varying:stoch-reg} we get that 
    \begin{align*}
        \storeg(\mu_a) &= \sum_{i=1}^M \frac{1}{M} \E_{\mu_a} \biggl[\sum_{t=1}^T (\mu_a(A_t;G^i) - \mu_a(a;G^i)) \biggr] \\
        &= \sum_{i=1}^M \frac{1}{M} \E_{\mu_a} \biggl[\sum_{t=1}^T \left(\frac{1}{2} - \epsilon \I\{A_t(i)=a(i)\} - \left(\frac{1}{2}-\epsilon\right)\right) \biggr] \\
        &= \frac{\epsilon}{M} \sum_{i=1}^M \E_{\mu_a} \biggl[\sum_{t=1}^T (1 - \I\{A_t(i)=a(i)\}) \biggr] \\
        &= \frac{\epsilon}{M} \sum_{i=1}^M \brb{T - N_{\mu_a}(i,a;T)} \enspace,
    \end{align*}
    where for environment $\mu$, action $a$, and game $i$, $N_{\mu}(i,a; T) = \E_{\mu} \bsb{\sum_{t=1}^T  \mathbb{I}\{A_t(i) = a(i)\}}$ is the expected number of times in environment $\mu$ that the action chosen by the policy agrees with action $a$ in game $i$.
    Next, we use Lemma \ref{low:conversion-lemma} to obtain that
    \begin{equation} \label{low:varying:reg-kl}
        \storeg(\mu_a) \geq \frac{\epsilon}{M} \sum_{i=1}^M \bigg(T - N_{\mu^{-i}_a}(i,a;T)-T\sqrt{\frac{1}{2}\bkl{\pr_{\mu^{-i}_a}}{\pr_{{\mu_a}}} }\bigg) \enspace.
    \end{equation}

    For bounding the KL-divergence term, we start from Lemma \ref{low:varying:kl-decomp}:
    \begin{align*}
         \bkl{\pr_{\mu^{-i}_a}}{\pr_{{\mu_a}}}
         &= \frac{1}{M} \sum_{s=1}^M \sum_{a'\in V} N_{\mu^{-i}_a}(a';T) \bklber{\mu^{-i}_a(a';G^s)}{\mu_a(a';G^s)} \\
         &= \frac{1}{M} \sum_{a'\in V} N_{\mu^{-i}_a}(a';T) \bklber{\mu^{-i}_a(a';G^i)}{\mu_a(a';G^i)} \\
         &= \frac{1}{M} \sum_{a'\in V} N_{\mu^{-i}_a}(a';T) \bklber{1/2}{1/2 - \epsilon \I\{a'(i)=a(i)\}} \\
         &= \frac{1}{M} \sum_{a'\in V}  \I\{a'(i)=a(i)\} N_{\mu^{-i}_a}(a';T) \bklber{1/2}{1/2 - \epsilon} \\
         &\leq \frac{c \epsilon^2}{M} \sum_{a'\in V}  \I\{a'(i)=a(i)\} N_{\mu^{-i}_a}(a';T) \\
         &= \frac{c \epsilon^2}{M} \sum_{a'\in V}  \I\{a'(i)=a(i)\} \E_{\mu^{-i}_a} \sum_{t=1}^T \I\{A_t=a'\} \\
         &= \frac{c \epsilon^2}{M} \E_{\mu^{-i}_a}\sum_{t=1}^T \I\{A_t(i) = a(i)\} \\
         &= \frac{c \epsilon^2}{M} N_{\mu^{-i}_a}(i,a;T) \enspace,
    \end{align*}
    where the second equality holds since the two environments only differ in $G^i$, and the inequality holds for $\epsilon \leq \frac{1}{4}$ with $c = 8\log{\frac{4}{3}}$. Plugging back into \eqref{low:varying:reg-kl} gets us that
    \begin{equation} \label{low:varying:reg-one-env}
        \storeg(\mu_a) \geq \frac{\epsilon}{M} \sum_{i=1}^M \bigg(T - N_{\mu^{-i}_a}(i,a;T)-\epsilon T\sqrt{\frac{c}{2M}N_{\mu^{-i}_a}(i,a;T)}\bigg) \enspace.
    \end{equation}

\subsection*{IV. Summing up}
    Fix $i \in [M]$. Notice that for $W \in V/\sim_i$,
    \[
        \sum_{a \in W} \I\{A_t(i) = a(i)\} = 1 
    \]
    since each action in $W$ corresponds to a different choice of base action in game $i$. 
    Hence,
    \begin{align*}
        \frac{1}{\alpha^{M}} \sum_{a \in V} N_{\mu^{-i}_a}(i,a;T) &= \frac{1}{\alpha^{M}} \sum_{W \in V/\sim_i}
        \sum_{a \in W} N_{\mu^{-i}_a}(i,a;T) \\
        &= \frac{1}{\alpha^{M}} \sum_{W \in V/\sim_i}
        \sum_{a \in W} N_{\mu^{-i}_W}(i,a;T) \\
        &= \frac{1}{\alpha^{M}} \sum_{W \in V/\sim_i}
         \E_{\mu^{-i}_W} \lsb{\sum_{t=1}^T \sum_{a \in W} \I\{A_t(i) = a(i)\}} \\
         &= \frac{1}{\alpha^{M}} \alpha^{M-1} T =  \frac{T}{\alpha} \enspace.
    \end{align*}
    Using this together with \eqref{low:varying:reg-one-env} allows us to conclude that
    \begin{align*}
        \sup_\mu \storeg(\mu) 
        &\geq \frac{1}{\alpha^{M}} \sum_{a \in V}  \storeg(\mu_a) \\
        &\geq \frac{1}{\alpha^{M}} \sum_{a \in V}  \frac{\epsilon}{M} \sum_{i=1}^M \bigg(T - N_{\mu^{-i}_a}(i,a;T)-\epsilon T\sqrt{\frac{c}{2M}N_{\mu^{-i}_a}(i,a;T)}\bigg) \\
        &\geq \frac{\epsilon}{M} \sum_{i=1}^M \biggl(T - \frac{1}{\alpha^{M}} \sum_{a \in V}  N_{\mu^{-i}_a}(i,a;T)-\epsilon T \sqrt{\frac{c}{2 M \alpha^{M}}  \sum_{a \in V} N_{\mu^{-i}_a}(i,a;T)}\biggr) \\
        &= \frac{\epsilon}{M} \sum_{i=1}^M \bigg(T - \frac{T}{\alpha}-\epsilon T \sqrt{\frac{cT}{2 M \alpha}}\bigg) \\
        &= \epsilon T \bigg(1 - \frac{1}{\alpha}-\epsilon \sqrt{\frac{cT}{2 M \alpha}}\bigg) \\
        &\geq \epsilon T \bigg(\frac{1}{2}-\epsilon \sqrt{\frac{cT}{2 M \alpha}}\bigg) \enspace ,
    \end{align*}
    where the third inequality holds due to the concavity of the square root, and the last inequality holds by our assumption that $\alpha \geq 2$.
    Setting $\epsilon = \frac{1}{4} \sqrt{\frac{2 M \alpha}{cT}}$ yields that
    \begin{align*}
        \sup_\mu \storeg(\mu) &\geq \frac{1}{16}\sqrt{\frac{2}{c}}\cdot \sqrt{\alpha T M}
        \geq \frac{1}{18} \sqrt{\alpha T M}
        = \frac{1}{18} \sqrt{\alpha T \log_{\alpha} K} \enspace ,
    \end{align*}
    whereas it holds that $\epsilon\leq\frac{1}{4}$ thanks to the assumption made on $T$ in the statement of the theorem.

\subsection*{V. The case when \texorpdfstring{$\log_\alpha K$}{M} is not an integer}
If $M$ is not an integer,\footnote{Note that $M$ is never smaller than $1$ since $\alpha \leq K$.} we can use the same construction as before for the first $\alpha^{\floor{M}}$ actions and force the remaining actions to behave identically to some action in the construction.
That is, we can designate a certain action such that, in all environments, all the excess actions receive the same loss as this action and are connected to it, to each other, and to every action that happens to share an edge with this designated action in a given graph (in other words, we are expanding the designated action into a clique). 
This way, the independence number of all the graphs in the construction is still $\alpha$, and the excess actions do not provide any extra utility to the learner; playing one of them is exactly like playing the designated action, and the construction does not hide this from the player. 
We can then obtain the same bound as before but in terms of $\floor{M}$, thus costing us an extra $1/\sqrt{2}$ factor to recover the desired bound (using that $\floor{M} \geq M/2$).
\end{proof}

\section{Comparison with \mytcite{chen2023interpolating}} \label{app:comparison-concurrent}
In \cite{chen2023interpolating}, the authors consider a special case of the undirected feedback graph problem where the graph (fixed and known) is composed of $\alpha$ disjoint cliques with self-loops. For $j \in [\alpha]$, let $m_j$ denote the number of actions in the $j$-th clique, implying that $\sum_{j=1}^\alpha m_j = K$ (the number of arms). For this problem, \cite[Theorem 4]{chen2023interpolating} provides a lower bound of order $\sqrt{T \sum_{j=1}^\alpha \ln(m_j + 1)}$. In particular, if the cliques are balanced (i.e., $m_1 = \dots = m_\alpha = K/\alpha$), the lower bound becomes of order $\sqrt{\alpha T \ln(1+K/\alpha)}$, thus matching the regret bound of \Cref{alg:FTRL}.
This means that, for any value of $1\le\alpha\le K$, there are feedback graphs on $K$ nodes with independence number $\alpha$ such that no other algorithm can achieve a better minimax regret guarantee than that of our proposed algorithm.

We emphasize that this does not imply \emph{graph-specific} minimax optimality.
Indeed, as shown in~\cite{chen2023interpolating}, when the cliques are unbalanced, the regret guarantee of our algorithm can be inferior to that of the algorithm they proposed, which matches the $\sqrt{T \sum_{j=1}^\alpha \ln(m_j + 1)}$ bound.
However, beyond the disjoint cliques case, their algorithm requires computing a minimum clique cover for the given feedback graph $G$, which is known to be NP-hard~\cite{karp1972}.
More importantly, their reliance on a clique cover leads to a dependence of the regret on the clique cover number $\theta(G)$ instead of the independence number $\alpha(G)$.
One can argue that the ratio between $\theta(G)$ and $\alpha(G)$ can be $\Omega(K/(\ln K)^2)$ for most graphs on a sufficiently large number $K$ of vertices (e.g., see~\cite[Section~6]{mannor2011side}).
Finally, it is not clear how to generalize their approach to time-varying feedback graphs (informed or uninformed).
Hence, despite the contributions of our work and those of \cite{chen2023interpolating}, the problem of characterizing the minimax regret rate at a graph-based granularity still calls for further investigation.

\section{Directed Strongly Observable Feedback Graphs} \label{app:directed-setting}

In this section, we consider the case of directed strongly observable graphs. 
For a directed graph $G=(V,E)$, let $\inneigh{G}{i} = \{j \in V \,:\, (j,i) \in E\}$ be the in-neighbourhood of node $i \in V$ in $G$, and let $\outneigh{G}{i} = \{j \in V \,:\, (i,j) \in E\}$ be its out-neighbourhood. A directed graph $G$ is strongly observable if for every $i \in V$, at least one of the following holds: $i \in \inneigh{G}{i}$ or $j \in \inneigh{G}{i}$ for all $j \neq i$.
The independence number $\alpha(G)$ is still defined in the same manner as before; that is, the cardinality of the largest set of nodes such that no two nodes share an edge, regardless of orientation.
The interaction protocol is the same as in the undirected case, except that, in each round $t \in [T]$, the learner only observes the losses of the actions in $\outneigh{G_t}{I_t}$, which is the out-neighbourhood in graph $G_t$ of the action $I_t$ picked by the learner. As before, we will use $\inneigh{t}{i}$ and $\outneigh{t}{i}$ to denote $\inneigh{G_t}{i}$ and $\outneigh{G_t}{i}$ respectively.
For this setting, a bound of $\calO\brb{\sqrt{\alpha T} \cdot \ln(KT)}$ was proven in~\cite{alon2015beyond} for the $\textsc{Exp3.G}$ algorithm. Later, \cite{zimmert2019connections} proved a bound of $\mathcal{O}\big(\sqrt{\alpha T(\ln K)^3}\big)$ for $\textsc{OSMD}$ with a variant of the $q$-Tsallis entropy regularizer where $q$ was chosen as $1-1/(\ln K)$.  

To use \Cref{alg:FTRL} in the directed case, one can define loss estimates analogous to \eqref{eq:unbiased-estimator-2} by using the in-neighbourhood in place of the neighbourhood in the relevant quantities. Namely, let $S_t = \lcb{i \in V : i \notin \inneigh{t}{i}}$, $J_t = \lrc{i \in S_t : p_t(i) > 1/2}$, and $P_t(i) = \sum_{j \in \inneigh{t}{i}} p_t(j)$. The loss estimates (again due to \cite{zimmert2019connections}) can then be given by
\begin{equation*}
    \hat{\loss}_t(i) = \begin{cases}
        \frac{\loss_t(i)}{P_t(i)} \I\lcb{I_t \in \inneigh{t}{i}} & \text{if $i \in V \setminus J_t$} \\[.5em]
        \frac{\loss_t(i)-1}{P_t(i)} \I\lcb{I_t \in \inneigh{t}{i}} + 1 & \text{if $i \in J_t$} \enspace.
    \end{cases}
\end{equation*}
\Cref{alg:FTRL} with these loss estimates can be analyzed in a similar manner to the proof of \Cref{thm:regret}, with the major difference being the way that the variance term is handled for actions with self-loops. Namely, the relevant term is
\[
    \sum_{i \in V: i\in \inneigh{t}{i}} \frac{p_t(i)^{2-q}}{\sum_{j\in \inneigh{t}{i}} p_t(j)},
\]
on which we elaborate more in the following.

Let $p \in \Delta_K$ and $\beta \in (0, 1/2)$ be such that $\min_{i \in V} p(i) \ge \beta$.
We first consider the variance term given by the negative Shannon entropy regularizer.
It is known~\cite{alon2015beyond} that such a variance term, restricted to nodes with a self-loop in the strongly observable feedback graph $G=(V,E)$, has an upper bound of the form
\begin{equation} \label{eq:variance-bound-shannon-entropy}
    \sum_{i \in V: i\in \inneigh{G}{i}} \frac{p(i)}{\sum_{j\in \inneigh{G}{i}} p(j)} \le 4\alpha(G) \ln \bbrb{\frac{4K}{\alpha(G) \beta}} \enspace.
\end{equation}
In addition to the fact that this variance bound has a linear dependence on the independence number $\alpha(G)$ of $G$, we observe that there is a logarithmic factor in $K/\alpha$ and $1/\beta$ given by the fact that we now consider directed graphs.
The main problem is that, in general, we cannot hope to improve upon the above logarithmic factor as it can be shown to be unavoidable unless we manage to restrict the probability distributions we consider.
Indeed, it is possible to show~\cite[Fact~4]{alon-journal} that there exist probability distributions $p\in \Delta_K$ and directed strongly observable graphs $G$ for which $\alpha(G)=1$ and
\[
    \sum_{i \in V: i\in \inneigh{G}{i}} \frac{p(i)}{\sum_{j\in \inneigh{G}{i}} p(j)} = \frac{K+1}{2} = \frac12 \log_2\bbrb{\frac{4}{\min_{i} p(i)}}
    = \alpha(G) \log^{\omega(1)}\bbrb{\frac{K}{\alpha(G)}}
    \enspace.
\]
A usual way to avoid this is to introduce some explicit exploration to the probability distributions in order to force a lower bound on the probabilities of all nodes, e.g., as in $\textsc{Exp3.G}$~\cite{alon2015beyond}.
This would bring the linear dependence on $K$ down to $\alpha$ in the above bad case, while, on the other hand, introducing a $\ln(KT)$ factor which then worsens the overall dependence on the time horizon $T$.

Consider now the variance term given by the analysis of the $q$-FTRL algorithm.
As already argued in \Cref{s:main}, we can reuse the variance bound in~\eqref{eq:variance-bound-shannon-entropy} for the case of negative Shannon entropy because
\[
    \sum_{i \in V: i\in \inneigh{G}{i}} \frac{p(i)^{2-q}}{\sum_{j\in \inneigh{G}{i}} p(j)} \le \sum_{i \in V: i\in \inneigh{G}{i}} \frac{p(i)}{\sum_{j\in \inneigh{G}{i}} p(j)}
\]
for any $q \in (0,1)$, and such a bound is the best known so far for the general case of directed strongly observable graphs.
However, we can be more clever in the way we utilize it.
Similarly to the proof of \cite[Theorem~14]{zimmert2019connections}, we can gain an advantage from the adoption of $q$-FTRL by splitting the sum in the variance term into two sums according to some adequately chosen threshold $\beta$ on the probabilities of the individual nodes.
More precisely, by choosing $\beta \approx \exp\brb{-\ln(K/\alpha)\ln K}$ and $q=1-1/(\ln K)$, we can prove that
\[
    \sum_{i \in V: i\in \inneigh{G}{i}} \frac{p(i)^{2-q}}{\sum_{j\in \inneigh{G}{i}} p(j)} = \calO\bbrb{\alpha \ln K \bbrb{1+\ln\frac{K}{\alpha}}} \enspace.
\]
We can further argue that, by following a similar analysis as in the proofs of Theorems~\ref{thm:regret-self-loops} and~\ref{thm:regret}, this variance bound would allow to show that the regret of $q$-FTRL is $\calO\Brb{\sqrt{\alpha T \brb{1+\ln(K/\alpha)}} \cdot \ln K}$, where there is an additional $\ln K$ factor when compared to our regret bound in the undirected case (\Cref{thm:regret}).

The presence of extra logarithmic factors is to be expected in the directed case, as many edges between distinct nodes might reduce the independence number of the graph, while providing information in one direction only.
However, the undirected graph $G'$ obtained from any directed strongly observable graph $G$ by reciprocating edges between distinct nodes has the same independence number $\alpha(G')=\alpha(G)$ but the regret guarantee given by the more general analysis of $q$-FTRL would introduce a spurious $\ln K$ multiplicative factor.
We remark that all the currently available upper bounds on the variance term (either with negative Shannon entropy or negative $q$-Tsallis entropy regularizers) do not exactly reflect the phenomenon of a gradually disappearing logarithmic factor when the graph is closer to being undirected (i.e., has fewer unreciprocated edges).

Taking these observations into account, we believe that it should be possible to achieve tighter guarantees that match our intuition, by improving the currently available tools.
The bound on the variance term, for instance, is one part of the analysis that might be improvable.
We might want to have a similar bound as~\eqref{eq:variance-bound-shannon-entropy} but with a sublinear dependence on $\alpha$ that varies according to the parameter $q$ of the negative $q$-Tsallis entropy; e.g., ignoring logarithmic factors, we could expect it to become of order $\alpha^q$ as we managed to prove for the undirected case (\Cref{lem:turan-tsallis}).
Doing so could allow a better tuning of $q$ that might lead to improved logarithmic factors in the regret.

\end{document}